\newtheorem{theorem}{Theorem}[section]
\newtheorem{lemma}[theorem]{Lemma}
\newtheorem{definition}[theorem]{Definition}
\def\Vcal{\mathcal V}
\def\bE{\mathbb E}
\def\bI{\mathbb I}
\begin{document}

\def\spacingset#1{\renewcommand{\baselinestretch}%
{#1}\small\normalsize} \spacingset{1}


\title{\bf Causal nearest neighbor rules for optimal treatment regimes}
\author{Xin Zhou\\[5pt]
{Departments of Biostatistics and Epidemiology}\\
{Harvard T.H. Chan School of Public Health}\\
{Boston, Massachusetts 02115, U.S.A.}
\\[20pt]
Michael R. Kosorok\\[5pt]
{Department of Biostatistics}\\
{University of North Carolina at Chapel Hill}\\
{Chapel Hill, North Carolina 27599, U.S.A.}
\\[1pt]
}
\maketitle

\begin{abstract}
The estimation of optimal treatment regimes is of considerable interest to precision medicine.
In this work, we propose a causal $k$-nearest neighbor method to estimate the optimal treatment regime. The method roots in the framework of causal inference, and estimates the causal treatment effects within the nearest neighborhood.
Although the method is simple, it possesses nice theoretical properties. We show that the causal $k$-nearest neighbor regime is universally consistent. That is, the causal $k$-nearest neighbor regime will eventually learn the optimal treatment regime as the sample size increases. We also establish its convergence rate. However, the causal $k$-nearest neighbor regime may suffer from the curse of dimensionality, \textit{i.e.}, performance deteriorates as dimensionality increases. To alleviate this problem, we develop an adaptive causal $k$-nearest neighbor method to perform metric selection and variable selection simultaneously. The performance of the proposed methods is illustrated in simulation studies and in an analysis of a chronic depression clinical trial.
\end{abstract}

\noindent%
{\it Keywords:} Precision medicine; Adaptive rule; Universal consistency; Convergence rate; Causal inference.

\section{Introduction}
Precision medicine has recently gained much attention in treating complex diseases, such as cancer and mental disorders. The purpose of precision medicine is to tailor treatments to individual patients to maximize treatment benefit and safety in health care.
Modern precision medicine is different from the traditional ``one-size-fits-all'' approach, which does not rigorously take into account the treatment heterogeneity.

A major component of precision medicine is the treatment selection rule, or optimal treatment regime. A treatment regime is a decision rule that assigns a treatment to a patient based on his or her clinical or medical characteristics. A large number of approaches have been developed to estimate optimal treatment regimes based on data from clinical trials or observational studies (see \citet{Murphy:Design2005, Qian:ITR2011, Zhang:RobustITR2012, Taylor2015:QRF} and references therein). Most of these methods are regression-based. They model the conditional mean outcomes, and obtain the estimated treatment regime by comparing the regression estimates.

Several researchers have applied classification methods to optimal treatment regimes. For example, \citet{Zhao:OWL2012} viewed the treatment regime estimation as a weighted classification problem, and proposed outcome weighted learning to construct an optimal treatment regime to optimize the observed clinical outcome directly. Recently, \citet{Zhou2015:RWL} proposed residual weighted learning, which uses residuals to replace outcomes, to improve finite sample performance of outcome weighted learning.
\citet{Zhang2012:ClassificationITR} also proposed a general framework to make use of weighted classification methods to generate treatment regimes. As an illustrating example, they constructed a weighted classification problem through a doubly robust augmented inverse probability weighted estimator of the conditional mean outcome, and used classification and regression trees \citep{Breiman:CART84} to produce interpretable regimes.

The $k$-nearest neighbor rule is a simple and intuitively appealing classification approach, where a subject is classified by a majority vote of its neighbors. Since its conception \citep{Fix1951:KNN}, it has attracted many researchers, and retains its popularity today \citep{Stone1977:NN, Hastie1996:DAN, Wager2015:HeteroForest}.
The rationale of nearest neighbor rules is that close covariate vectors share similar properties more often than not.

In this article, we propose a causal $k$-nearest neighbor method for optimal treatment regimes. The method roots in the framework of causal inference, and compares the causal treatment effects within the nearest neighborhood.
Although the method is simple, it possesses nice theoretical properties. Firstly, we show that the causal $k$-nearest neighbor regime is universally consistent.
Without knowing any specifics about the distribution underlying the data, a universally consistent treatment regime would eventually learn the Bayes regime when the sample size approaches infinity. Secondly, we establish its convergence rate. The convergence rate is as high as $n^{-1/2}$ with appropriately chosen $k$ if the dimension of covariates is 1 or 2, and the rate is $n^{-2/(p+2)}$ for dimension $p\geq3$.

Similar to the nearest neighbor rule for classification, the causal $k$-nearest neighbor regime suffers from the curse of dimensionality, \textit{i.e.}, performance deteriorates as dimensionality increases. To alleviate this problem, we propose an adaptive causal $k$-nearest neighbor method, where the distance metric is adaptively determined from the data. Through adaptive metric selection, this adaptive method performs variable selection implicitly. The superior performance of the adaptive causal $k$-nearest neighbor regime over the original causal $k$-nearest neighbor regime is illustrated in the simulation studies.
In practical settings, we recommend the adaptive causal $k$-nearest neighbor method.

\section{Methods}
\subsection{Causal nearest neighbor rules}
Consider a randomized clinical trial with $L$ treatment arms. Let $R\in\mathcal{R}$ denote the observed clinical outcome, $A\in\mathcal{A}=\{1,\ldots,L\}$ denote the treatment assignment received by the patient, and ${X}=(X_1,\ldots,X_p)^T\in\mathcal{X}\subset\mathbb{R}^p$, where $\mathcal{X}$ is compact, denote the patient's clinical covariates. Assume that larger values of $R$ are preferred. Let $\pi_{\ell}({x}) = \textrm{pr}(A=\ell|{X}={x})$ denote the probability of being assigned treatment $\ell$ for a patient with clinical covariates ${x}$. This probability is predefined in the design.

We then introduce the potential outcomes framework to formally identify the optimal treatment regime. The potential outcomes, denoted $R^*(1),\cdots,R^*(L)$, are defined as the outcomes that would be observed were a patient to receive treatment $1,\cdots,L$, respectively \citep{Robins:PS1986}. As in the literature of potential outcomes, we require the following assumptions. The first one is the consistency assumption \citep{Robins1994:Noncomp}: the potential outcomes and the observed outcomes agree, \textit{i.e.}, $R = \sum_{\ell=1}^LR^*(\ell)\bI(A=\ell)$. We also assume that conditional on covariates ${X}$, the potential outcomes $\{R^*(1), \cdots,R^*(L)\}$ are independent of the treatment assignment $A$ that has been actually received. This is called the assumption of no unmeasured confounders. It always holds in a randomized clinical trial.

A treatment regime $d$ is a function from clinical covariates ${X}$ to the treatment assignment $A$. For a treatment regime $d$, we can thus define its potential outcome $R^*(d) =  \sum_{\ell=1}^LR^*(\ell)\bI(d({X})=\ell)$. It would be the observed outcome if a patient from the population were to be assigned treatment according to regime $d$.
The expected potential outcome under any regime $d$, given as $\Vcal(d)=\bE(R^*(d))$, is called the value function associated with regime $d$.
An optimal regime $d^*$ is a regime that maximizes $\Vcal(d)$. The regime $d^*$ is also called the Bayes regime.
There is a positivity assumption that $\pi_{\ell}({X})>0$ almost everywhere for any $\ell\in\mathcal{A}$. That is, any treatment option must be represented in the data in order to estimate an optimal regime. For simplicity, let $m_{\ell}({x})=\bE(R^{\ast}(\ell)|{X}={x})$.
It is easy to obtain that
\begin{equation} \label{eq:bayesknn}
d^*({x})=\textrm{argmax}_{\ell\in\{1,\ldots,L\}} m_{\ell}({x}).
\end{equation}
Note that $m_{\ell}({x})=\bE(R^{\ast}(\ell)|{X}={x})=\bE(R|{X}={x},A=\ell)$ by the consistency and no-unmeasured-confounders assumptions. It is identifiable in the observed data.

The $k$-nearest neighbor rule is a nonparametric method used for classification and regression \citep{Fix1951:KNN}. In this article, we apply the nearest neighbor rule to optimal treatment regimes. The idea is simple. We use the nearest neighbor algorithm to find a neighborhood of ${x}$ in $\mathcal{X}$, then estimate $m_{\ell}({x})$ for each arm in this neighborhood, and plug into (\ref{eq:bayesknn}) to get the nearest neighbor estimate for the optimal treatment regime. Similar procedures are proposed in the recent literature for tree-based nonparametric approaches \citep{Athey2016:Partitioning,Wager2015:HeteroForest}, where they target a partition in $\mathcal{X}$ to estimate the treatment heterogeneity.

Let $D_n=\{({X}_i, A_i, R_i): i=1,\ldots,n\}$ denote the observed data.
We fix ${x}\in\mathcal{X}$, and reorder the observed data $D_n$ according to increasing values of $||{X}_i-{x}||$. The reordered data sequence is denoted by
\begin{equation*}
\Big({X}_{(1,n)}({x}), A_{(1,n)}({x}), R_{(1,n)}({x})\Big),\ldots,\Big({X}_{(n,n)}({x}), A_{(n,n)}({x}), R_{(n,n)}({x})\Big).
\end{equation*}
Thus ${X}_{(1,n)}({x}),\ldots,{X}_{(k,n)}({x})$ are the $k$ nearest neighbors of ${x}$. 
$m_{\ell}({x})$ can be approximated in the $k$-nearest neighborhood of $x$ by
\begin{equation} \label{eq:knnrulej}
\hat{m}_\ell({x}) = \sum_{i=1}^n W_{n,i}^\ell({x}) R_{(i,n)}({x}), \, \textrm{where}\,
W_{n,i}^\ell({x})=\left\{
\begin{array}{cc}
\frac{{\bI(A_{(i,n)}({x})=\ell)}/{\pi_\ell\left({X}_{(i,n)}({x})\right)}}{\sum_{j=1}^k{\bI(A_{(j,n)}({x})=\ell)}/{\pi_\ell\left({X}_{(j,n)}({x})\right)}}
& {\rm if\ } i\leq k,\\
0 & {\rm if\ } i>k.
\end{array}
\right.
\end{equation}
and $\bI(\cdot)$ is the indicator function, as suggested in \citet{Murphy:Design2005}.
Here we define $0/0=0$.
Let $r({x})$ be the distance of the $k$th nearest neighbor to $x$, and define $S_{{x},\epsilon}=\{z\in\mathcal{X}:||z-x||\leq \epsilon\}$. It is straightforward through the consistency and no-unmeasured-confounders assumptions to see that $\hat{m}_\ell({x})$ is an unbiased estimator for
$\bE(R^{\ast}(\ell)|X\in S_{{x},r(x)})$. Hence $\hat{m}_\ell({x})$ is a reasonable approximation to $m_{\ell}({x})$.
Then the plug-in estimate of the Bayes regime in (\ref{eq:bayesknn}) is
\begin{equation} \label{eq:knnregimerule}
d^{CNN}({x}) = \textrm{argmax}_{\ell\in\{1,\ldots,L\}} \hat{m}_\ell({x}).
\end{equation}
This is called the causal $k$-nearest neighbor regime because of its close relationship to causal effects.

We need to address the problem of distance ties, \textit{i.e.}, when $||{x}-{X}_i||=||{x}-{X}_j||$ for some $i\neq j$.
\citet[Section 11.2]{Devroye:PatternRecog1996} discussed several methods for breaking distance ties.
In practical use, we adopt the tie-breaking method used in \citet{Stone1977:NN}. Subjects who have the same distance from ${x}$ as the $k$th nearest neighbor are averaged on the outcome $R$. We denote the distance of the $k$th nearest neighbor to ${x}$ by $\rho_k({x})$, and define the sets $A_k({x}):=\{i:||{x}-{X}_i||<\rho_k({x})\}$ and $B_k({x}):=\{i:||{x}-{X}_i||=\rho_k({x})\}$.
The revised rule of (2) in the main paper is as follows:
\begin{equation} \label{eq:knnstonerulej}
\tilde{m}_\ell({x}) = \frac{\sum_{i\in A_k({x})}R_{(i,n)}({x})\frac{\bI(A_{(i,n)}({x})=\ell)}{\pi_\ell\left({X}_{(i,n)}({x})\right)}+\frac{k-|A_k({x})|}{|B_k({x})|}\sum_{i\in B_k({x})}R_{(i,n)}({x})\frac{\bI(A_{(i,n)}({x})=\ell)}{\pi_\ell\left({X}_{(i,n)}({x})\right)}}
{\sum_{i\in A_k({x})}\frac{\bI(A_{(i,n)}({x})=\ell)}{\pi_\ell\left({X}_{(i,n)}({x})\right)}+\frac{k-|A_k({x})|}{|B_k({x})|}\sum_{i\in B_k({x})}\frac{\bI(A_{(i,n)}({x})=\ell)}{\pi_\ell\left({X}_{(i,n)}({x})\right)}}.
\end{equation}
The corresponding causal nearest neighbor regime is the regime in (3) of the main paper after replacing $\hat{m}_\ell({x})$ with $\tilde{m}_\ell({x})$. This is not a strictly $k$-nearest neighbor rule when there are distance ties on the $k$th nearest neighbor, since the estimate uses more than $k$ neighbors.

The causal nearest neighbor regimes are based on local averaging. Here, $k$ is a tuning parameter. It is required that $k$ be small enough so that local changes of the distribution can be detected. On the other hand, $k$ needs to be large enough so that averaging over the arm is effective. We may tune this parameter by a cross validation procedure to balance the two requirements. 


In this article, we focus on applications in randomized clinical trials. However, the proposed methods can be easily extended to observational studies. We still require three assumptions (consistency, no unmeasured confounders and positivity). The only difference is that the assumption of no unmeasured confounders automatically holds in randomized clinical trials. In observational studies, it may hold when all relevant confounders have been measured, though this assumption cannot be verified in practice. One additional step for observational studies is to estimate the treatment allocation probabilities $\pi_{\ell}({x})$, which can be obtained through, for example, logistic regression.

To our knowledge, no nearest neighbor related methods were applied in optimal treatment regimes. \citet{Wager2015:HeteroForest} described a standard $k$-nearest neighbor matching procedure to estimate heterogeneous treatment effects. \citet{Wager2015:HeteroForest} used a different estimator
\begin{equation} \label{nnestimator}
\bar{m}_\ell({x}) = \frac{1}{k}\sum_{i\in\mathcal{S}_{\ell}}R_i,
\end{equation}
where $\mathcal{S}_{\ell}$ is the set of $k$ nearest neighbors to $x$ in the treatment arm $\ell$. $\hat{m}_\ell({x})$ in (\ref{eq:knnrulej}) and $\bar{m}_\ell({x})$ in (\ref{nnestimator}) are two estimates of ${m}_\ell({x})$ from different perspectives. $\hat{m}_\ell({x})$ is an estimate of $\bE(R^{\ast}(\ell)|{X}={x})$, while $\bar{m}_\ell({x})$ is an estimate of $\bE(R|{X}={x},A=\ell)$. Our proposed causal $k$-nearest neighbor method is slightly distinct in two ways. First, the estimates $\hat{m}_\ell({x})$ are obtained from the same neighborhood of $x$. The subsequent comparison is more sensible. Second, for fairly large neighborhood, the inverse probability weighting estimator (\ref{eq:knnrulej}) corrects for variations $\pi_{\ell}({x})$ inside the neighborhood. This is particularly useful for applications in observational studies.


\subsection{Theoretical Properties}
In machine learning, a classification rule is called universally consistent if its expected error probability approaches the Bayes error probability, in probability or almost surely,
for any distribution underlying the data \citep{Devroye:PatternRecog1996}.
The $k$-nearest neighbor classification is the first to be proved to possess such universal consistency \citep{Stone1977:NN}. Here, we extend the concept of universal consistency to optimal treatment regimes.

\begin{definition}
Given a sequence $D_n$ of data, a regime $d_n$ is universally (weakly) consistent if $\lim_{n\rightarrow\infty}\mathcal{V}(d_n) = \mathcal{V}(d^*)$ in probability for any probability measure $P$ on $\mathcal{X}\times\mathcal{A}\times\mathcal{R}$, and universally strongly consistent if $\lim_{n\rightarrow\infty}\mathcal{V}(d_n) = \mathcal{V}(d^*)$ almost surely
for any probability measure $P$ on $\mathcal{X}\times\mathcal{A}\times\mathcal{R}$.
\end{definition}


Denote the probability measure for ${X}$ by $\mu$, and recall that $S_{{x},\epsilon}$ is the closed ball centered at ${x}$ of radius $\epsilon>0$. The collection of all ${x}$ with $\mu(S_{{x},\epsilon})>0$ for all $\epsilon>0$ is called the support of $\mu$ \citep{Cover1967:NN}. The set is denoted as $support(\mu)$.

The analysis of universal consistency requires some assumptions. 
\begin{enumerate}
\item[(A1)] There exists a constant $\zeta>0$ such that $\pi_\ell({x}) \geq \zeta$ for any ${x}\in support(\mu)$ and $\ell\in\{1,\ldots,L\}$;
\item[(A2)] $\bE|R| < \infty$;
\item[(A3)] Distance ties occur with probability zero in $\mu$.
\end{enumerate}
These assumptions are quite weak. Assumption (A1) is just the positivity assumption, and $\zeta$ can be obtained by design. Assumption (A2) is natural. This assumption is automatically satisfied for bounded outcomes, \textit{i.e.}, $|R|\leq M <\infty$ for some constant $M$.
Assumption (A3) is to avoid the messy problem of distance ties. 
When (A3) does not hold, we may add a small uniform variable $U\sim uniform(0,\epsilon)$ independent of $({X},A,R)$ to the vector ${X}$.
This causes the $(p+1)$-dimensional random vector ${X}'=({X},U)$ to satisfy Assumption (A3).
We may perform the $k$-nearest neighbor method on the modified data $D'_n=\{({X}'_i, A_i, R_i): i=1,\ldots,n\}$. Because of the independence of $U$, the corresponding conditional outcome $m'_{\ell}({x}') = \mathbb{E}(R|{X}'={x}',A=\ell)=m_{\ell}({x})$. Hence Assumption (A3) is reasonable, but at the cost of potentially compromising performance by introducing an artificial covariate to the regime.
When $\epsilon$ is very small, we actually break ties randomly. The difference with Stone's tie-breaking method is that Stone's method takes into account all subjects whose distance to ${x}$ equals that of the $k$th nearest neighbor, while the tie-breaking method here only picks one of them randomly. The remark following the proof of Theorem \ref{thm:knnconsistency} in Appendix A demonstrates that Stone's tie-breaking estimate in (\ref{eq:knnstonerulej}) is asymptotically better than the random tie-breaking method here.

The following theorem shows universal consistency of the causal nearest neighbor regime. The proofs of theorems are provided in Appendix A.
\begin{theorem} \label{thm:knnconsistency}
For any distribution $P$ for $({X},A,R)$ satisfying assumptions (A1)$\sim$(A3),

(i) the causal $k$-nearest neighbor regime in (\ref{eq:knnregimerule}) is universally weakly consistent if
$k\rightarrow\infty$ and $k/n\rightarrow0$;

(ii) the causal regime in (\ref{eq:knnregimerule}) is universally strongly consistent if $k/\log(n)\rightarrow\infty$ and $k/n\rightarrow0$.

If Assumption (A2) is tightened to $|R|\leq M<\infty$ for some constant $M$, the regime in (\ref{eq:knnregimerule}) is universally strongly consistent if $k\rightarrow\infty$ and $k/n\rightarrow0$.
\end{theorem}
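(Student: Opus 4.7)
The plan is to reduce universal consistency of the value to $L^1$-type consistency of the pointwise estimators $\hat m_\ell$ and then invoke Stone's classical $k$-NN regression theorem in a ratio form. First, I would establish the regret bound
\[
0\le \Vcal(d^*)-\Vcal(d^{\mathrm{CNN}})
 \le 2\sum_{\ell=1}^{L}\bE\bigl|\hat m_\ell(X)-m_\ell(X)\bigr|,
\]
where on the right the expectation is over a fresh $X\sim\mu$ with the data $D_n$ held fixed. This follows from the standard argmax comparison: on $\{d^{\mathrm{CNN}}(x)=\ell\ne d^*(x)\}$,
\[
m_{d^*(x)}(x)-m_\ell(x)=[m_{d^*}-\hat m_{d^*}](x)+[\hat m_{d^*}-\hat m_\ell](x)+[\hat m_\ell-m_\ell](x)\le 2\max_{\ell'}|\hat m_{\ell'}(x)-m_{\ell'}(x)|,
\]
the middle bracket being nonpositive by definition of $d^{\mathrm{CNN}}$. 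It thus suffices to show each summand on the right vanishes (in probability for (i), almost surely for (ii)).

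Next I would rewrite $\hat m_\ell(x)=\bar Y^\ell(x)/\bar Z^\ell(x)$, where $\bar Y^\ell(x):=\frac{1}{k}\sum_{i=1}^{k}Y^\ell_{(i,n)}(x)$ and $\bar Z^\ell(x):=\frac{1}{k}\sum_{i=1}^{k}Z^\ell_{(i,n)}(x)$ are ordinary $k$-NN regression estimators built from the inverse-probability-weighted pseudo-outcomes $Y^\ell:=R\,\bI(A=\ell)/\pi_\ell(X)$ and $Z^\ell:=\bI(A=\ell)/\pi_\ell(X)$. Consistency and no-unmeasured-confounders give $\bE[Y^\ell\mid X{=}x]=m_\ell(x)$ and $\bE[Z^\ell\mid X{=}x]=1$, while (A1)--(A2) give $Y^\ell,Z^\ell\in L^1$ with $Z^\ell$ bounded by $1/\zeta$. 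Under (A3), Stone's theorem for $k$-NN regression (Devroye, Györfi and Lugosi, 1996, Ch.~6) then yields $\bE|\bar Y^\ell(X)-m_\ell(X)|\to0$ and $\bE|\bar Z^\ell(X)-1|\to0$ as soon as $k\to\infty$ and $k/n\to0$. The almost-sure analogue of these $L^1$ convergences follows from the same authors' exponential/McDiarmid-type inequalities, and needs $k/\log n\to\infty$ in the unbounded case but only $k\to\infty$ when $|R|\le M$.

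The delicate step is passing from numerator and denominator to the ratio. Writing
\[
\hat m_\ell-m_\ell=\frac{\bar Y^\ell-m_\ell\,\bar Z^\ell}{\bar Z^\ell}
\]
and splitting on $\{\bar Z^\ell(X)>1/2\}$: there, $|\hat m_\ell-m_\ell|\le 2|\bar Y^\ell-m_\ell|+2|m_\ell|\,|\bar Z^\ell-1|$, both pieces converging by the previous step and $X$-integrable since $m_\ell\in L^1(\mu)$. On the complement $\{\bar Z^\ell(X)\le1/2\}$ the arm labels of the $k$ neighbors are, conditionally on covariates, independent Bernoullis with success probability $\ge\zeta$, so Hoeffding (or already the elementary $\bP(\bar Z^\ell=0\mid X)\le(1-\zeta)^k$) gives an exponential-in-$k$ tail; coupled with a truncation of $|R|$ at level $M$ and (A2), this drives the residual contribution to $\bE|\hat m_\ell(X)-m_\ell(X)|$ to zero.

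The main obstacle, and why the theorem is not a one-line corollary of Stone, is precisely this ratio step: one has to rule out the event that no $k$-neighbor falls in arm $\ell$ (the case where $0/0=0$ kicks in) while simultaneously controlling the tail of $|R|$ uniformly in $k$. For part~(ii) the same decomposition is executed pathwise by combining almost-sure $L^1$ versions of Stone's theorem with a Borel--Cantelli argument on the exponential tail for $\bar Z^\ell$, and the bounded-$R$ refinement drops the $\log n$ factor because McDiarmid's inequality then applies directly to $\bE|\hat m_\ell(X)-m_\ell(X)|$ viewed as a function of $D_n$.
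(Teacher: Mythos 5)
Your overall strategy is sound and, in its middle part, genuinely different from the paper's. The paper also starts from the regret bound $\Vcal(d^*)-\Vcal(d^{CNN})\le\sum_{\ell}\int|\hat m_\ell({x})-m_\ell({x})|\mu(d{x})$ (its Lemma A.1, the same argmax comparison you use), but it then argues that classical $k$-NN regression results ``may not apply'' because its weights depend on the $A_i$'s, and so it re-verifies Stone-type conditions from scratch for the unnormalized estimator $\hat m'_\ell$, re-runs a McDiarmid argument (via a surrogate estimator built on the deterministic radius $\rho^*_n({x})$ solving $\mu(S_{{x},\rho^*_n({x})})=k/n$) for bounded $R$, and adapts the truncation-plus-cone-covering argument of Devroye et al.\ (1994) for unbounded $R$ (this is exactly where $k/\log n\to\infty$ enters). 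Your pseudo-outcome reformulation $Y^\ell=R\,\bI(A=\ell)/\pi_\ell(X)$, $Z^\ell=\bI(A=\ell)/\pi_\ell(X)$ turns numerator and denominator into bona fide $k$-NN regression estimates with $X$-only weights, so the classical weak and strong consistency theorems apply off the shelf, and what remains is the ratio step. That is an attractive simplification, and your quantitative treatment of the denominator (Hoeffding, conditionally on the covariates) is if anything more careful than the paper's, which simply invokes the law of large numbers to claim the denominator tends to one almost surely and then divides.

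Two steps of your plan are under-justified, however, precisely at the places you flag as delicate. First, the bad-event contribution $\int\bI(\bar Z^\ell\le 1/2)\,|\hat m_\ell-m_\ell|\,d\mu$ with unbounded $R$: ``Hoeffding plus truncation'' is not enough as stated, because on the bad event the self-normalized weights can put all their mass on a single neighbor, so the tail part $|R|\bI(|R|>M)$ is not dominated by anything of order $\bE\{|R|\bI(|R|>M)\}$ without an extra decoupling. You need, for example, a leave-one-out conditioning argument (the bad event computed without observation $i$ is independent of $A_i$ given the covariates) combined with the neighbor-expectation lemma, or you must perform the truncation at the level of the unnormalized estimator, as the paper's Lemma A.3 does --- and it is there that the cone-covering bound, and hence the condition $k/\log n\to\infty$, is genuinely needed for the almost-sure statement. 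Second, your claim that in the bounded case ``McDiarmid applies directly'' to $\int|\hat m_\ell-m_\ell|\,d\mu$ is not immediate for the random-radius $k$-NN functional: changing one sample point can change the estimate by order $M$ on the random set of ${x}$ whose $k$-NN set contains that point, and bounding the $\mu$-measure of that set by $O(k/n)$ deterministically is exactly what fails; the paper circumvents this with the deterministic-radius surrogate $\hat m^*_\ell$ and Lemma 11.1 of Devroye et al.\ (1996) before applying McDiarmid and Borel--Cantelli. Both gaps are fixable along these lines (and are fixed in the paper), so your route can be completed, but as written these two steps do not yet constitute a proof.
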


The next natural question is whether the associated value of the causal $k$-nearest neighbor regime tends to the Bayes value at a specified rate. To establish the rate of convergence, we require stronger assumptions. 
\begin{enumerate}
\item[(A1$'$)] $\sum_{i=1}^nW_{n,i}^\ell({x})=1$ for all ${x}\in\mathcal{X}$ and $\ell=1,\ldots,L$, and there exists a constant $c$ such that $W_{n,i}^\ell({x})\leq c/k$ for all ${x}\in\mathcal{X}$, $i=1,\ldots,n$ and $\ell=1,\ldots,L$;
\item[(A2$'$)] there exists a constant $\sigma^2$ such that $\sigma^2_\ell({x})=\textrm{var}(R|{X}={x},A=\ell)\leq\sigma^2$ for all ${x}\in support(\mu)$ and $\ell=1,\ldots,L$;
\item[(A3$'$)] distance ties occur with probability zero in $\mu$, and the support of $\mu$ is compact with diameter $2\rho$;
\item[(A4$'$)] $m_\ell$'s are Lipschitz continuous, \textit{i.e.}, there exists a constant $C>0$ such that $|m_\ell({x})-m_\ell({x}')|\leq C||{x}-{x}'||$ for any ${x}$ and ${x}'$ in $\mathcal{X}$, and $\ell=1,\ldots,L$.
\end{enumerate}
Assumption (A1$'$) implies that randomization is not extremely skewed with respect to the covariates. Assumptions (A2$'$)$\sim$(A4$'$) are standard in the literature of nearest neighbor rules \citep{Gyorfi2002:Nonparametric}. The following theorem gives the convergence rate of causal $k$-nearest neighbor regimes. This theorem is proved in Appendix B.
\begin{theorem} \label{thm:knnconvergence}
For any distribution $P$ for $({X},A,R)$ satisfying Assumptions (A1$'$)$\sim$(A4$'$), there exists a sequence $k$ such that $k\rightarrow \infty$ and $k/n\rightarrow0$, and
\begin{equation*}
\bE\left\{\left(\mathcal{V}(d^*) - \mathcal{V}(d^{CNN})\right)^2\right\} = O(n^{-\beta}).
\end{equation*}
When $p=1$, $\beta=1/2$; when $p=2$, $\beta$ can be arbitrarily close to $1/2$; when $p\geq3$, $\beta=2/(p+2)$.
\end{theorem}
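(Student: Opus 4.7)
The plan is to reduce the squared regret to the sum of $L^2$-errors of the causal $k$-NN regression estimates $\hat{m}_\ell$, and then bound each such error by a conditional bias--variance decomposition combined with a standard moment bound on the $k$-th nearest-neighbor distance $\rho_k(X)$.

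First, a standard plug-in argument converts excess value into excess regression error. On $\{d^{CNN}(x) \neq d^*(x)\}$ the defining inequality $\hat m_{d^{CNN}(x)}(x) \geq \hat m_{d^*(x)}(x)$ yields $m_{d^*(x)}(x) - m_{d^{CNN}(x)}(x) \leq 2\sum_{\ell} |\hat m_\ell(x) - m_\ell(x)|$, hence $\Vcal(d^*) - \Vcal(d^{CNN}) \leq 2\int \sum_\ell |\hat m_\ell(x) - m_\ell(x)|\, d\mu(x)$. Squaring, then applying Jensen across $\mu$ and Cauchy--Schwarz across $\ell$, and finally taking expectation over $D_n$, gives
\[
\bE\bigl[(\Vcal(d^*) - \Vcal(d^{CNN}))^2\bigr] \leq 4L \sum_{\ell=1}^L \bE\bigl[(\hat m_\ell(X) - m_\ell(X))^2\bigr],
\]
where the outer expectation is over $D_n$ and an independent test point $X\sim\mu$.

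Second, condition on $\{(X_i, A_i)\}_{i=1}^n$. Then $W_{n,i}^\ell(x)$ is deterministic, sums to one by (A1$'$), is bounded by $c/k$, and vanishes unless $i\leq k$ and $A_{(i,n)}(x)=\ell$. Decompose
\[
\hat m_\ell(x) - m_\ell(x) = \sum_i W_{n,i}^\ell(x)\bigl[m_\ell(X_{(i,n)}(x)) - m_\ell(x)\bigr] + \sum_i W_{n,i}^\ell(x)\bigl[R_{(i,n)}(x) - m_\ell(X_{(i,n)}(x))\bigr].
\]
Lipschitz continuity (A4$'$) together with $\|X_{(i,n)}(x) - x\| \leq \rho_k(x)$ for $i \leq k$ bounds the first (bias) term pointwise by $C\rho_k(x)$; by (A2$'$) the second (noise) term is mean zero with conditional variance at most $\sigma^2\sum_i (W_{n,i}^\ell(x))^2 \leq c\sigma^2/k$. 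Combining,
\[
\bE\bigl[(\hat m_\ell(X) - m_\ell(X))^2\bigr] \leq C^2\,\bE[\rho_k(X)^2] + \frac{c\sigma^2}{k}.
\]

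Third, control $\bE[\rho_k(X)^2]$ via a distribution-free covering argument on the compact support in (A3$'$), which yields $\bE[\rho_k(X)^p] = O(k/n)$ in every dimension $p\geq 1$. For $p\geq 3$, Jensen applied to the concave map $y\mapsto y^{2/p}$ gives $\bE[\rho_k(X)^2] = O((k/n)^{2/p})$; balancing against $1/k$ forces $k\asymp n^{2/(p+2)}$ and rate $n^{-2/(p+2)}$. For $p=1$, the trivial bound $\rho_k^2 \leq 2\rho\cdot\rho_k$ gives $\bE[\rho_k(X)^2] = O(k/n)$, so $k\asymp\sqrt{n}$ yields $n^{-1/2}$. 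For $p=2$, the analogous argument carries a logarithmic factor, the usual cost of covering $\bR^2$ without a density assumption on $\mu$, so a choice such as $k\asymp\sqrt{n}/\log n$ gives a rate $n^{-\beta}$ for any $\beta<1/2$. All three choices satisfy $k\to\infty$ and $k/n\to 0$, as required.

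The main technical obstacle is the sharp distribution-free control of $\bE[\rho_k(X)^2]$ on compact support, in particular the borderline case $p=2$, where the unavoidable log factor prevents achieving the rate $n^{-1/2}$ exactly; the relevant nearest-neighbor distance moment bounds can be imported from the nonparametric regression literature (e.g.\ Gy\"orfi, Kohler, Krzy\.zak and Walk, 2002). The rest of the argument is essentially bookkeeping, modulo care with the measurability of $\rho_k$ and with the fact that the nonzero weights are supported only on those neighbors with $A_{(i,n)}(x)=\ell$, both of which are handled by conditioning on $\{(X_i,A_i)\}$ as above.
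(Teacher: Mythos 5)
Your proposal follows essentially the same route as the paper: the value-to-regression reduction is the paper's Lemma A.1 combined with Jensen and Cauchy--Schwarz, the conditional bias--variance split given $\{(X_i,A_i)\}$ (using (A1$'$) for $\sum_i W_{n,i}^\ell=1$ and $W_{n,i}^\ell\le c/k$, (A2$'$) for the noise term, (A4$'$) for the bias term bounded by $C\rho_k(x)$) is exactly the paper's decomposition via $m_\ell^*(x)=\bE(\hat m_\ell(x)\mid X_1,A_1,\ldots,X_n,A_n)$, and the final balancing of $k$ is identical.

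The one soft spot is your treatment of $\bE[\rho_k(X)^2]$. The blanket claim that a covering argument gives $\bE[\rho_k(X)^p]=O(k/n)$ in every dimension is not justified and sits in tension with your own (correct) admission that the case $p=2$ carries a log: distribution-free, the moment of order equal to the dimension is exactly the borderline case where the logarithmic factor appears, which is why the sharp second-moment bounds are stated dimension by dimension. If you only have $\bE[\rho_k^p]=O((k/n)\log(n/k))$, your Jensen step for $p\ge3$ yields $\bE[\rho_k^2]=O(((k/n)\log(n/k))^{2/p})$ and the final rate picks up an extra $(\log n)^{2/p}$, falling short of the stated $O(n^{-2/(p+2)})$. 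The paper avoids this by invoking the second-moment bounds directly (its Lemma C.7, Corollary 6 of Biau, C\'erou and Guyader, 2010): $\bE\|X_{(k,n)}(X)-X\|^2\le 16\rho^2 k/n$ for $p=1$, $\le 8\rho^2(k/n)(1+\log(n/k))$ for $p=2$, and $\le 8\rho^2\lfloor n/k\rfloor^{-2/p}/(1-2/p)$ for $p\ge3$. Since you explicitly defer to the nonparametric regression literature for these moment bounds, the proof is salvageable as written, but you should cite the second-moment bounds in this dimension-specific form rather than derive them from an unproven $p$-th moment bound via Jensen.
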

The rate of convergence is as high as $n^{-1/2}$ if the dimensionality $p$ is 1 or 2. When $p$ increases, the convergence rate decreases significantly. As with nearest neighbor rules in classification and regression, the causal $k$-nearest neighbor regime also suffers from the curse of dimensionality.

\subsection{Adaptive rules} \label{sec:adaptive}
The causal $k$-nearest neighbor regime is consistent as shown previously. However, it is well known that the curse of dimensionality can severely hurt nearest neighbor rules in finite samples. The rate of convergence in Theorem \ref{thm:knnconvergence} is slow when dimensionality is high. Hence appropriate variable selection may improve performance.
In this section, we propose an adaptive causal $k$-nearest neighbor method to estimate the optimal treatment regime, and to perform metric selection and variable selection simultaneously. 

Let $\Sigma=\textrm{diag}(\sigma_1^2, \ldots,\sigma_p^2)$
and use the distance metric $d({x}_1,{x}_2)=\{({x}_1-{x}_2)^T\Sigma({x}_1-{x}_2)\}^{1/2}$ to compute the distance between ${x}_1$ and ${x}_2$. $\sigma_j$ is the scaling factor for the $j$th covariate. Setting ${\sigma_j}=0$ is equivalent to discarding the $j$th covariate. We intend to set a large $\sigma_j^2$ if the $j$th covariate is important for treatment selection.

We apply the following univariate method to evaluate the importance of an individual covariate. It is related to a test statistic comparing two treatment regimes \citep{Murphy:Design2005}. One regime $d^j$ only involves the $j$th covariate; and the other $d^0$, called the non-informative regime, assigns all patients to the treatment with the largest estimated potential outcome $\hat{\bE}(R^{\ast}(\ell))=\sum_{i=1}^n\{R_i\bI(A_i=\ell)/\pi_{\ell}({X}_i)\}/\sum_{i=1}^n\{\bI(A_i=\ell)/\pi_{\ell}({X}_i)\}$. For a specific regime $d$, let $d_i$ be the treatment assignment for the $i$th subject according to $d$. The value function associated with $d$ is estimated by
\begin{equation} \label{eq:valueestimate}
\hat{\Vcal}(d) = \sum_{i=1}^n\left\{R_i{\bI(A_i=d_i)}/{\pi_{A_i}({X}_i)}\right\}\Big/\sum_{i=1}^n\left\{{\bI(A_i=d_i)}/{\pi_{A_i}({X}_i)}\right\}.
\end{equation}
For two regimes, $d^j$ and $d^0$, a consistent estimator of the variance of $\surd{n}(\hat{\Vcal}(d^j)-\hat{\Vcal}(d^0))$ is
\begin{equation} \small
\widehat{var}\left(\surd{n}(\hat{\Vcal}(d^j)-\hat{\Vcal}(d^0))\right) = \frac{1}{n}\sum_{i=1}^n\left\{\left(\frac{\bI(A_i=d^j_i)(R_i-\hat{\Vcal}(d^j))}{\pi_{A_i}({X}_i)}\right)^2+ \left(\frac{\bI(A_i=d^0_i)(R_i-\hat{\Vcal}(d^0))}{\pi_{A_i}({X}_i)}\right)^2\right\}. \label{eq:valuevariance}
\end{equation}
The statistic $T_j = {\surd{n}(\hat{\Vcal}(d^j)-\hat{\Vcal}(d^0))}\Big/{\left\{\widehat{var}\left(\surd{n}(\hat{\Vcal}(d^j)-\hat{\Vcal}(d^0))\right)\right\}^{1/2}}$
asymptotically has a standard normal distribution under the null hypothesis that ${\Vcal}(d^j)={\Vcal}(d^0)$ \citep{Murphy:Design2005}. When the statistic is greater than zero, regime $d^j$ is considered better than the non-informative regime $d^0$, otherwise $d^0$ is better.
The statistic $T_j$ reflects the importance of the $j$th covariate on optimal treatment regimes.
We estimate $d^j$ by the causal $k$-nearest neighbor method only using the $j$th covariate. 

We set $\sigma_j^2 = (T_j-\Delta)_+$ for each $j=1,\ldots,p$, where $\Delta\in\mathbb{R}$ is a predefined parameter and $(\cdot)_+$ is the positive part. The adaptive causal $k$-nearest neighbor regime follows the same procedure used in the causal $k$-nearest neighbor regime described above, except that the adaptive one uses the distance metric $d({x}_1,{x}_2)=\{({x}_1-{x}_2)^T\Sigma({x}_1-{x}_2)\}^{1/2}$ to compute the distance between ${x}_1$ and ${x}_2$. When $\Delta$ is very large (for example, $\Delta\rightarrow +\infty$), all $\sigma_j^2$ are zero, hence the adaptive regime degenerates to a non-informative regime. On the other hand, when $\Delta$ is very small (for example, $\Delta\rightarrow -\infty$), all $\sigma_j^2$ are almost identical, and the adaptive regime is equivalent to the causal $k$-nearest neighbor one.
Figure~\ref{knnfig:examples} illustrates the effects of $\Delta$ on the construction of $\Sigma$.

\begin{figure}
\centering
\subfigure[$\sigma_1^2=\sigma_2^2=\sigma_3^2=\sigma_4^2=\sigma_5^2=0$]{\label{knnfig:example1} \includegraphics[scale=0.5]{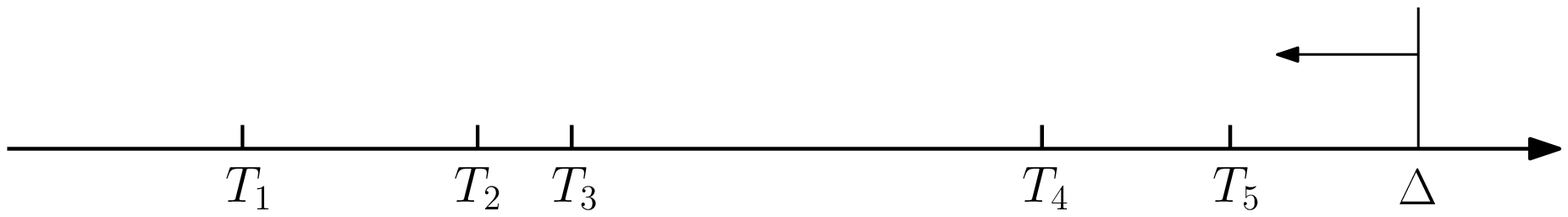}} \\[2ex]
\subfigure[$0=\sigma_1^2=\sigma_2^2=\sigma_3^2<\sigma_4^2<\sigma_5^2$]{\label{knnfig:example2} \includegraphics[scale=0.5]{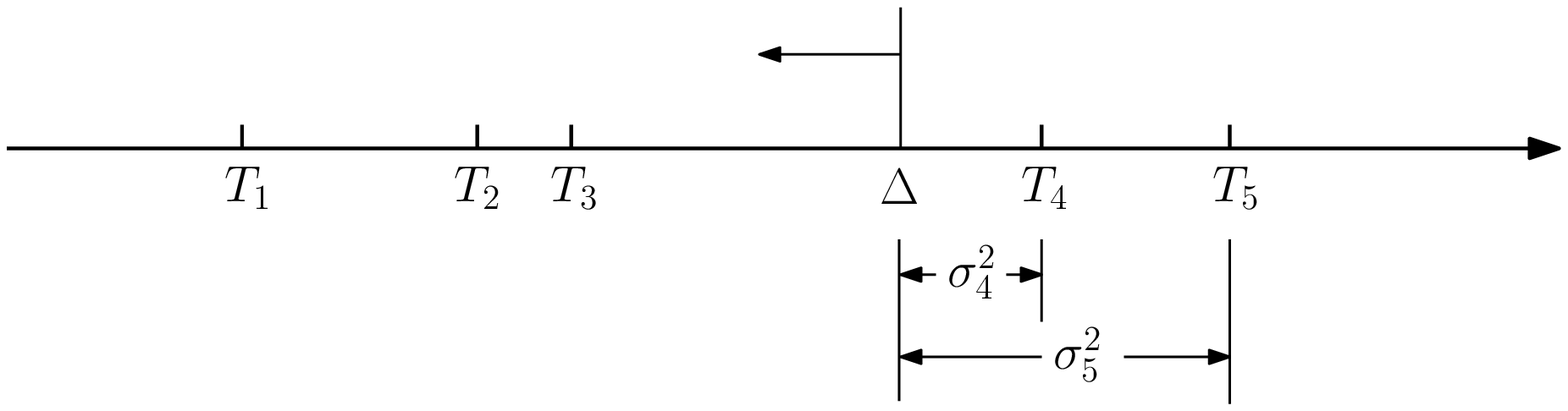}} \\[2ex]
\subfigure[$0<\sigma_1^2<\sigma_2^2<\sigma_3^2<\sigma_4^2<\sigma_5^2$]{\label{knnfig:example3} \includegraphics[scale=0.5]{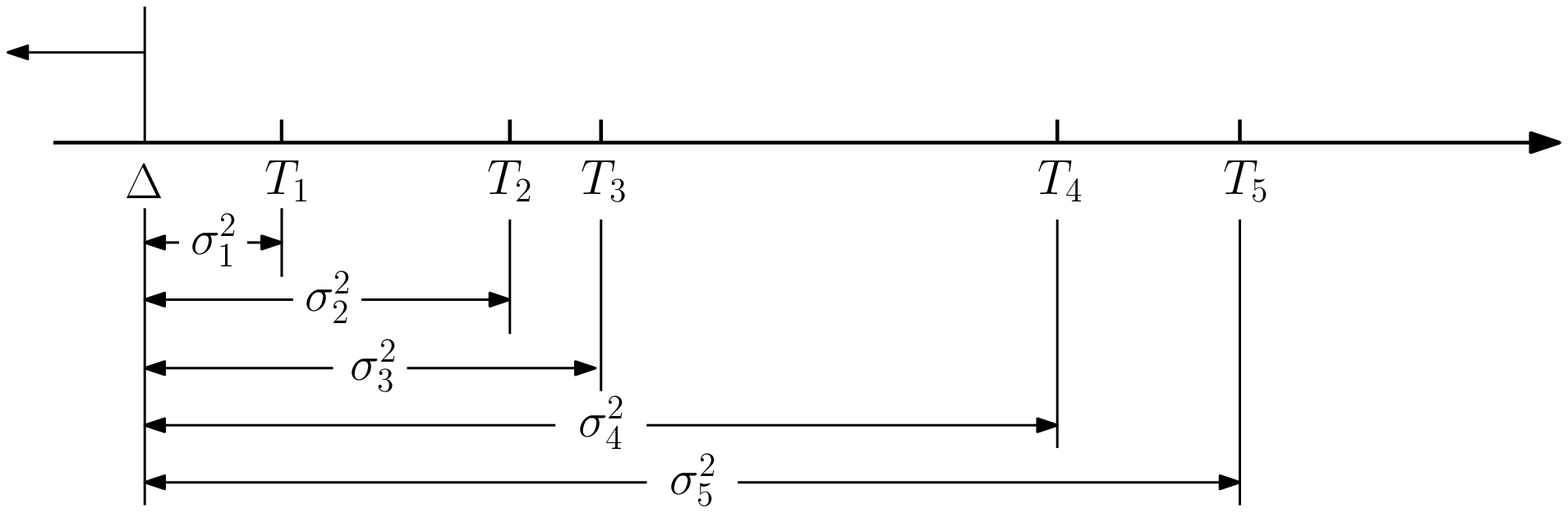}}
\caption{Three examples for construction of $\Sigma$ with different choices of $\Delta$. Suppose that there are five covariates with test statistics $T_1,\cdots,T_5$ from individual comparison tests. In example (a), $\Delta$ is greater than any test statistic, and hence $\sigma^2$'s are all zero. Under this situation, the adaptive causal $k$-nearest neighbor regime degenerates to a non-informative regime. When $\Delta$ decreases, some $\sigma^2$'s turn to positive from zero. In example (b), $\Delta$ is between $T_3$ and $T_4$. $\sigma_4^2$ and $\sigma_5^2$ are positive, and the first three are still zero. It is equivalent to throwing away the first three covariates in the analysis. When $\Delta$ continues to decrease, in example (c) $\Delta$ is smaller than any test statistic. All $\sigma^2$'s are non-zero. The adaptive causal $k$-nearest neighbor regime involves all five covariates. However, the fifth covariate contributes the most for the regime, and the first contributes the least.}
\label{knnfig:examples}
\end{figure}

Here is a summary of the adaptive causal nearest neighbor procedure:

1) Normalize each covariate to a similar scale.

2) Calculate $T_j$ and $\Sigma=\textrm{diag}(\sigma_1^2, \ldots,\sigma_p^2)$, where $\sigma_j^2 = (T_j-\Delta)_+$ and $j=1,\ldots,p$.

3) Use the metric $d({x}_1,{x}_2)=\left\{({x}_1-{x}_2)^T\Sigma({x}_1-{x}_2)\right\}^{1/2}$ to estimate a causal $k$-nearest neighbor regime.





The scaling at the first step is to avoid covariates in greater numeric
ranges dominating those in smaller numeric ranges.  We recommend linearly
scaling each covariate to the range $[-1, +1]$ or $[0, 1]$ \citep{Hsu2003:svmguide}.
For the adaptive causal $k$-nearest neighbor regime, there are two tuning parameters, $k$ and $\Delta$. We tune the parameters using 10-fold cross validation.

\section{Simulation studies}
We performed extensive simulations to evaluate empirical performance of the causal $k$-nearest neighbor and adaptive causal $k$-nearest neighbor methods.

We first considered simulations for two-arm data ($L=2$).
In the simulations, we generated $p$-dimensional vectors of clinical covariates. The first two covariates were independent Bernoulli random variables with success probability of $0.5$, and the remaining covariates were independent standard normal random variables $N(0,1)$. The treatment $A$ was generated from $\mathcal{A}=\{1, 2\}$ independently of ${X}$ with $P(A=1)=0.5$, \textit{i.e.,} $\pi_1({x})=\pi_2({x})=0.5$ for any ${x}\in\mathbb{R}^{p}$.
To mimic a well balanced trial, we generated simulation data such that $n_1/n=n_2/n=0.5$, where $n$ is the sample size of the data, $n_1$ and $n_2$ are the numbers of patients in treatment arm 1 and 2, respectively.
The response $R$ was normally distributed with mean $Q_0({x},a)$ and standard deviation 1. We considered three scenarios with different choices of $Q_0({x},a)$:
\begin{enumerate}
\item[(1)] $Q_0({x},1)=(1+0.5x_1+0.8x_2+x_3-0.5x_4+0.7x_5) + (0.3-0.2x_1-0.5x_3)$;

$Q_0({x},2)=(1+0.5x_1+0.8x_2+x_3-0.5x_4+0.7x_5) - (0.3-0.2x_1-0.5x_3)$.
\item[(2)] $Q_0({x},1)=(1+0.5x_1+0.8x_2+0.3x_3^2-0.5x_4^2+0.7x_5) + (0.3x_3 - 0.5x_4^2 + 0.4)$;

$Q_0({x},2)=(1+0.5x_1+0.8x_2+0.3x_3^2-0.5x_4^2+0.7x_5) - (0.3x_3 - 0.5x_4^2 + 0.4)$.
\item[(3)] $Q_0({x},1)=(1+0.5{x}_1+0.8{x}_2+0.3\tilde{x}_3-0.5\tilde{x}_4+0.7\tilde{x}_5) + (1 - \tilde{x}_3-\tilde{x}_4)$;

$Q_0({x},2)=(1+0.5{x}_1+0.8{x}_2+0.3\tilde{x}_3-0.5\tilde{x}_4+0.7\tilde{x}_5) - (1 - \tilde{x}_3-\tilde{x}_4)$,

where $\tilde{x}_j = \min(x_j^2,1)$, for $j=3,4,5$.
\end{enumerate}


We run the simulations for two different dimensions of covariates: low dimensional data ($p=5$) and moderate dimensional data ($p=25$). On low dimensional data ($p=5$), we compared empirical performances of the following seven methods: (1) $\ell_1$ penalized least squares proposed by \citet{Qian:ITR2011}; (2) Q-learning using random forests as described in \citet{Taylor2015:QRF}; 
(3) Residual weighted learning proposed in \citet{Zhou2015:RWL} using the linear kernel; (4) Residual weighted learning using the Gaussian kernel; (5) Augmented inverse probability weighted estimation proposed by \citet{Zhang2012:ClassificationITR}; (6) the causal $k$-nearest neighbor method; and (7) the proposed adaptive causal $k$-nearest neighbor method. When the dimension was moderate ($p=25$), two residual weighted learning methods were replaced with their variable selection counterparts \citep{Zhou2015:RWL}.

In the simulation studies, $\ell_1$ penalized least squares estimated a linear model on $(1,{X},A,{X}A)$ to approximate the conditional outcomes ${E}(R|{X},A)$, and also used the least absolute shrinkage and selection operator to carry out variable selection. The obtained regime was the treatment arm in which the conditional mean is larger.
Q-learning using random forests is nonparametric. The conditional outcomes ${E}(R|{X},A)$ were approximated using $({X},A)$ as input covariates in the random forests. The number of trees was set to 1000 as suggested in \citet{Taylor2015:QRF}.
Residual weighted learning is an improved method for outcome weighted learning \citep{Zhao:OWL2012}. Outcome weighted learning views the treatment selection as a weighted classification problem, and treats the original outcomes as weights. Residual weighted learning is similar except that outcomes are replaced with residuals of the outcome from a regression fit on covariates excluding treatment assignment. Residual weighted learning with the linear kernel estimates linear treatment regimes, while the one with the Gaussian kernel has the ability to detect nonlinear regimes. Residual weighted learning involves non-convex programming, and hence the computational cost is high.
For the augmented inverse probability weighted estimator, we first obtained the doubly robust version of the contrast function through linear regression, and then we let the propensity score be 0.5 and searched the optimal treatment regime using a classification and regression tree.

We applied 10-fold cross-validation for parameter tuning. The sample sizes were varied from $n=50$, $100$, $200$, $400$, to $800$ for each scenario. We repeated the simulation 500 times. For comparison, we generated a large test set with 10,000 subjects to evaluate performance. The comparison criterion was the value function of the estimated optimal treatment regime on the test set. Precisely, it is given by $\mathbb{P}_n^*[R\mathbb{I}(A=d({X}))/\pi_A({X})]/\mathbb{P}_n^*[\mathbb{I}(A=d({X}))/\pi_A({X})]$ \citep{Murphy:Design2005}, where $\mathbb{P}_n^*$ denotes the empirical average on the test data.

\begin{table}
\centering
\caption{Mean (standard deviation) of empirical value functions evaluated on the test set for Scenarios 1-3 when the dimension is low ($p=5$). The best value function for each scenario and sample size combination is in bold.}
\label{tab:valueknnlow}
\begin{threeparttable}
\begin{tabular}{lccccc}
\addlinespace
\toprule
& $n=50$ & $n=100$ & $n=200$ & $n=400$ & $n=800$ \\
\midrule
\multicolumn{6}{c}{Scenario 1 (Optimal value $2.09$)} \\
\midrule
$\ell_1$-PLS & \textbf{2.01 (0.08)} & \textbf{2.04 (0.06)} & \textbf{2.06 (0.04)} & \textbf{2.08 (0.02)} & \textbf{2.08 (0.01)} \\
Q-RF & 1.89 (0.08) & 1.96 (0.06) & 2.00 (0.04) & 2.02 (0.02) & 2.03 (0.01) \\
RWL-Linear & 1.96 (0.09) & 2.01 (0.06) & 2.05 (0.04) & 2.07 (0.02) & \textbf{2.08 (0.01)} \\
RWL-Gaussian & 1.97 (0.09) & 2.00 (0.07) & 2.03 (0.06) & 2.06 (0.03) & \textbf{2.08 (0.02)} \\
AIPWE & 1.92 (0.13) & 1.96 (0.10) & 2.00 (0.06) & 2.02 (0.04) & 2.03 (0.03) \\
CNN &  1.89 (0.10) & 1.95 (0.08) & 1.99 (0.06) & 2.02 (0.03) & 2.04 (0.02) \\
ACNN & 1.88 (0.13) & 1.94 (0.12) & 1.99 (0.08) & 2.02 (0.05) & 2.04 (0.03)  \\
\midrule
\multicolumn{6}{c}{Scenario 2 (Optimal value $1.95$)} \\
\midrule
$\ell_1$-PLS & 1.48 (0.08) & 1.53 (0.09) & 1.58 (0.09) & 1.64 (0.06) & 1.66 (0.03)  \\
Q-RF & 1.64 (0.10) & 1.74 (0.08) & 1.82 (0.05) & 1.87 (0.03) & 1.90 (0.02) \\
RWL-Linear & 1.55 (0.08) & 1.58 (0.07) & 1.61 (0.05) & 1.64 (0.04) & 1.66 (0.03) \\
RWL-Gaussian & 1.64 (0.11) & 1.71 (0.11) & 1.81 (0.08) & 1.86 (0.05) & 1.90 (0.02) \\
AIPWE & 1.63 (0.15) & 1.74 (0.13) & 1.81 (0.09) & 1.87 (0.05) & 1.90 (0.03)   \\
CNN & 1.64 (0.11) & 1.73 (0.09) & 1.81 (0.06) & 1.87 (0.03) & 1.90 (0.02) \\
ACNN & \textbf{1.65 (0.14)} & \textbf{1.76 (0.12)} & \textbf{1.84 (0.08)} & \textbf{1.89 (0.04)} & \textbf{1.92 (0.03)} \\
\midrule
\multicolumn{6}{c}{Scenario 3 (Optimal value $2.37$)} \\
\midrule
$\ell_1$-PLS &  1.88(0.03) & 1.89(0.03) & 1.89(0.03) & 1.89(0.04) & 1.90(0.03)  \\
Q-RF & 2.05 (0.08) & 2.14 (0.06) & 2.21 (0.04) & 2.26 (0.02) & 2.28 (0.01) \\
RWL-Linear & 1.93 (0.05) & 1.93 (0.05) & 1.96 (0.06) & 1.97 (0.06) & 1.98 (0.06) \\
RWL-Gaussian & 2.04 (0.09) & 2.13 (0.08) & 2.20 (0.06) & 2.26 (0.04) & 2.30 (0.02)  \\
AIPWE & 2.06 (0.13) & 2.17 (0.11) & 2.23 (0.05) & 2.26 (0.03) & 2.28 (0.02) \\
CNN & 2.02 (0.08) & 2.11 (0.06) & 2.18 (0.05) & 2.24 (0.03) & 2.28 (0.02) \\
ACNN &  \textbf{2.09 (0.11)} & \textbf{2.19 (0.08)} & \textbf{2.26 (0.06)} & \textbf{2.31 (0.04)} & \textbf{2.33 (0.02)}  \\
\bottomrule
\end{tabular}
\begin{tablenotes}
\item $\ell_1$-PLS, $\ell_1$ penalized least squares; Q-RF, Q-learning using random forests; RWL-Linear, residual weighted learning with linear kernel; RWL-Gaussian, residual weighted learning with Gaussian kernel; AIPWE, augmented inverse probability weighted estimation; CNN, causal $k$-nearest neighbor; ACNN, adaptive causal $k$-nearest neighbor.
\end{tablenotes}
\end{threeparttable}
\end{table}

The simulation results on the low dimensional data ($p=5$) are presented in Table~\ref{tab:valueknnlow}. Let $\mathcal{V}_{\ell}$, $\ell=1,\cdots,L$, be the value function when all subjects are sent to treatment $\ell$, and $\mathcal{V}^{\ast}$ be the optimal value function for simplicity. For Scenario 1, $\mathcal{V}_{1}=1.85$, $\mathcal{V}_{2}=1.45$, and $\mathcal{V}^{\ast}=2.09$. The optimal regime $d^{\ast}({x})$ is 1 if $0.2x_1+0.5x_3<0.3$, and 2 otherwise. 
The decision boundary was a linear combination of a binary covariate and a continuous covariate. $\ell_1$ penalized least squares performed very well since its model was correctly specified. Both residual weighted learning methods performed similarly to $\ell_1$ penalized least squares, especially when the sample size was large. Our proposed causal $k$-nearest neighbor and adaptive causal $k$-nearest neighbor methods showed similar performance to Q-learning using random forests and augmented inverse probability weighted estimation, and when the sample size was large they were close to $\ell_1$ penalized least squares and residual weighted learning.
For Scenario 2, $\mathcal{V}_{1}=1.34$, $\mathcal{V}_{2}=1.55$, and $\mathcal{V}^{\ast}=1.95$. The optimal regime $d^{\ast}({x})$ is 1 if $0.5x_4^2-0.3x_3<0.4$, and 2 otherwise. The decision boundary was nonlinear. $\ell_1$ penalized least squares and residual weighted learning with linear kernel both failed due to model misspecification. The adaptive causal $k$-nearest neighbor method yielded the best performance.
The causal $k$-nearest neighbor method showed similar performance to Q-learning using random forests, residual weighted learning with Gaussian kernel and augmented inverse probability weighted estimation.
For Scenario 3, $\mathcal{V}_{1}=1.88$, $\mathcal{V}_{2}=1.94$, and $\mathcal{V}^{\ast}=2.37$. The optimal regime $d^{\ast}({x})$ is 1 if $x^2_3+x^2_4<1$, and 2 otherwise. The decision boundary was highly nonlinear.
Similar to Scenario 2, our proposed adaptive causal $k$-nearest neighbor approach outperformed all other methods. The causal $k$-nearest neighbor method yielded similar performance to other nonlinear methods including Q-learning using random forests, residual weighted learning with Gaussian kernel and augmented inverse probability weighted estimation.


We move now to the moderate dimensional cases ($p=25$). The simulation results are shown in Table~\ref{tab:valueknnhigh}. In Scenario 1, $\ell_1$ penalized least squares outperformed other methods because of correct model specification and inside variable selection techniques. Residual weighted learning methods yielded similar performance to $\ell_1$ penalized least squares due to their variable selection mechanism. The causal $k$-nearest neighbor regime was not comparable with others in this scenario because of the lack of a variable selection procedure. It is well known that nearest neighbor rules deteriorate when there are irrelevant covariates present in the data. The proposed adaptive causal $k$-nearest neighbor approach showed similar performance to Q-learning using random forests and augmented inverse probability weighted estimation, and when the sample size was large it was close to $\ell_1$ penalized least squares and residual weighted learning methods. Their good performance can be explained by variable selection. The adaptive causal $k$-nearest neighbor approach carries out variable selection through the adaptive metric selection. Q-learning using random forests and augmented inverse probability weighted estimation, as two tree methods, have a built-in mechanism to perform variable selection \citep{Breiman:CART84}. In Scenarios 2 and 3, $\ell_1$ penalized least squares and residual weighted learning with linear kernel failed due to misspecification; again, causal $k$-nearest neighbor failed due to the lack of variable selection. Four nonparametric methods with variable selection, Q-learning using random forests, augmented inverse probability weighted estimate, residual weighted learning with Gaussian kernel and the adaptive causal $k$-nearest neighbor approach, stood out. Among them, our proposed adaptive causal $k$-nearest neighbor method ranked the first in both scenarios.

\begin{table}
\centering
\caption{Mean (standard deviation) of empirical value functions evaluated on on the test set for Scenarios 1-3 when the dimension is moderate ($p=25$). The best value function for each scenario and sample size combination is in bold.}
\label{tab:valueknnhigh}
\begin{threeparttable}
\begin{tabular}{lccccc}
\addlinespace
\toprule
& $n=50$ & $n=100$ & $n=200$ & $n=400$ & $n=800$ \\
\midrule
\multicolumn{6}{c}{Scenario 1 (Optimal value $2.09$)} \\
\midrule
$\ell_1$-PLS & \textbf{1.91 (0.12)} & \textbf{2.00 (0.07)} & \textbf{2.04 (0.04)} & \textbf{2.06 (0.02)} & \textbf{2.08 (0.02)} \\
Q-RF & 1.83 (0.10) & 1.91 (0.08) & 1.97 (0.06) & 2.01 (0.03) & 2.04 (0.01) \\
RWL-VS-Linear & 1.84 (0.12) & 1.97 (0.08) & 2.03 (0.05) & \textbf{2.06 (0.03)} & \textbf{2.08 (0.01)} \\
RWL-VS-Gaussian & 1.82 (0.13) & 1.92 (0.10) & 2.02 (0.07) & \textbf{2.06 (0.04)} & 2.07 (0.03)  \\
AIPWE & 1.80 (0.15) & 1.90 (0.12) & 1.97 (0.08) & 2.01 (0.05) & 2.03 (0.03) \\
CNN & 1.79 (0.09) & 1.82 (0.08) & 1.85 (0.06) & 1.89 (0.05) & 1.91 (0.04) \\
ACNN & 1.77 (0.12) & 1.83 (0.13) & 1.91 (0.12) & 2.00 (0.07) & 2.03 (0.04) \\
\midrule
\multicolumn{6}{c}{Scenario 2 (Optimal value $1.95$)} \\
\midrule
$\ell_1$-PLS & 1.44(0.06) & 1.44(0.06) & 1.44(0.07) & 1.45(0.06) & 1.46(0.06) \\
Q-RF &  1.48 (0.09) & 1.54 (0.09) & 1.68 (0.08) & 1.81 (0.06) & 1.87 (0.03)  \\
RWL-VS-Linear & 1.49 (0.07) & 1.52 (0.07) & 1.57 (0.07) & 1.62 (0.05) & 1.65 (0.04) \\
RWL-VS-Gaussian &  1.51 (0.09) & 1.61 (0.13) & \textbf{1.77 (0.12)} & \textbf{1.87 (0.07)} & \textbf{1.91 (0.04)} \\
AIPWE & 1.48 (0.09) & 1.55 (0.12) & 1.71 (0.13) & 1.82 (0.08) & 1.88 (0.04) \\
CNN & 1.49 (0.06) & 1.52 (0.06) & 1.56 (0.05) & 1.60 (0.05) & 1.65 (0.04) \\
ACNN &  \textbf{1.52 (0.11)} & \textbf{1.62 (0.15)} & {1.76 (0.13)} & {1.86 (0.07)} & {1.90 (0.04)} \\
\midrule
\multicolumn{6}{c}{Scenario 3 (Optimal value $2.37$)} \\
\midrule
$\ell_1$-PLS & 1.89 (0.02) & 1.89 (0.02) & 1.89 (0.02) & 1.89 (0.02) & 1.89 (0.02) \\
Q-RF & 1.92 (0.03) & 1.94 (0.04) & 1.99 (0.05) & 2.07 (0.06) & 2.18 (0.05)  \\
RWL-VS-Linear & 1.90(0.03) & 1.90(0.03) & 1.91(0.04) & 1.92(0.04) & 1.93(0.05) \\
RWL-VS-Gaussian & 1.94 (0.07) & 2.02 (0.13) & 2.20 (0.12) & \textbf{2.30 (0.07)} & {2.32 (0.06)}  \\
AIPWE & 1.92 (0.06) & 2.00 (0.12) & 2.15 (0.11) & 2.24 (0.04) & 2.27 (0.03)  \\
CNN & 1.92 (0.01) & 1.93 (0.02) & 1.94 (0.02) & 1.96 (0.02) & 1.98 (0.02) \\
ACNN & \textbf{1.99 (0.11)} & \textbf{2.10 (0.12)} & \textbf{2.23 (0.09)} & \textbf{2.30 (0.04)} & \textbf{2.33 (0.02)}  \\
\bottomrule
\end{tabular}
\begin{tablenotes}
\item $\ell_1$-PLS, $\ell_1$ penalized least squares; Q-RF, Q-learning using random forests; RWL-VS-Linear, residual weighted learning with variable selection and linear kernel; RWL-VS-Gaussian, residual weighted learning with variable selection and Gaussian kernel; AIPWE, augmented inverse probability weighted estimation; CNN, causal $k$-nearest neighbor; ACNN, adaptive causal $k$-nearest neighbor.
\end{tablenotes}
\end{threeparttable}
\end{table}

Here, the covariates were independent. We also run simulations to assess performance of our proposed methods when the covariates were correlated. The results are similar to the independent cases presented above. Details are collected in Appendix D.

We then evaluated the performance of our proposed methods on data with more than two treatment arms (say, $L=3$). The simulation setup was similar to that with two treatment arms. We generated $p$-dimensional vectors of clinical covariates as before. The treatment $A$ was generated from $\mathcal{A}=\{1, 2, 3\}$ independently of ${X}$ with  $\pi_1({x})=\pi_2({x})=\pi_3({x})=1/3$ for any ${x}\in\mathbb{R}^{p}$. The response $R$ was normally distributed with mean $Q_0({x},a)$ and standard deviation 1. We considered two scenarios with different choices of $Q_0({x},a)$:
\begin{enumerate}
\item[(4)] $Q_0({x},1)=(1+0.5x_1+0.8x_2+x_3-0.5x_4+0.7x_5)-0.5x_3$;

$Q_0({x},2)=(1+0.5x_1+0.8x_2+x_3-0.5x_4+0.7x_5) + 0.2x_3$;

$Q_0({x},3)=(1+0.5x_1+0.8x_2+x_3-0.5x_4+0.7x_5) + 0.5x_4$.
\item[(5)] $Q_0({x},1)=(0.5x_1+0.8x_2+0.3{x}_3-0.5{x}_4+0.7{x}_5)+(1.6\tilde{x}_3+0.4x_4+0.2)$;

$Q_0({x},2)=(0.5x_1+0.8x_2+0.3{x}_3-0.5{x}_4+0.7{x}_5)+(0.4{x}_3+2\tilde{x}_4-0.2)$;

$Q_0({x},3)=(0.5x_1+0.8x_2+0.3{x}_3-0.5{x}_4+0.7{x}_5)+(0.4{x}_3+0.4x_4+1)$,

where $\tilde{x}_j = \min(x_j^2,1)$, for $j=3,4$.
\end{enumerate}

We compared the performance of the following four methods: (1) $\ell_1$ penalized least squares proposed by \citet{Qian:ITR2011}; (2) Q-learning using random forests as described in \citet{Taylor2015:QRF}; (3) the proposed causal $k$-nearest neighbor method; and (4) the proposed adaptive causal $k$-nearest neighbor method. Residual weighted learning and augmented inverse probability weighted estimation methods have only been implemented for two treatment arms, and so are not included here. For each scenario, we varied sample sizes from $n=150$, $300$, $600$, to $1200$, and repeated the simulation 500 times. The independent test set was with a sample size of 30,000.

The simulation results on the low dimensional cases ($p=5$) are presented in Table~\ref{tab:valueknn3low}.
For Scenario 4, $\mathcal{V}_{1}=\mathcal{V}_{2}=\mathcal{V}_{3}=1.65$, and $\mathcal{V}^{\ast}=2.04$. The decision boundary is linear. $\ell_1$ penalized least squares produced the best performance because of correct model specification. The other nonparametric methods, Q-learning using random forests, causal $k$-nearest neighbor and adaptive causal $k$-nearest neighbor methods, showed similar performance.
For Scenario 5, $\mathcal{V}_{1}=1.67$, $\mathcal{V}_{2}=1.48$, $\mathcal{V}_{3}=1.65$, and $\mathcal{V}^{\ast}=2.21$. The decision boundary is nonlinear. $\ell_1$ penalized least squares was not comparable with other nonparametric methods as the postulated model was misspecified. Our proposed adaptive causal $k$-nearest neighbor method produced the best performance. The causal $k$-nearest neighbor method showed similar performance to Q-learning using random forests.

\begin{table}[t]
\centering
\caption{Mean (standard deviation) of empirical value functions evaluated on the test set for Scenarios 4 and 5 when the dimension is low ($p=5$). The best value function for each scenario and sample size combination is in bold.}
\label{tab:valueknn3low}
\begin{threeparttable}
\begin{tabular}{lcccc}
\addlinespace
\toprule
& $n=150$ & $n=300$ & $n=600$ & $n=1200$ \\
\midrule
\multicolumn{5}{c}{Scenario 4 (Optimal value $2.04$)} \\
\midrule
$\ell_1$-PLS & \textbf{1.94(0.08)} & \textbf{1.99(0.06)} & \textbf{2.02(0.02)} & \textbf{2.03(0.02)} \\
Q-RF & 1.82 (0.06) & 1.87 (0.05) & 1.91 (0.03) & 1.95 (0.02) \\
CNN & 1.83 (0.08) & 1.89 (0.06) & 1.93 (0.04) & 1.96 (0.03) \\
ACNN & 1.81 (0.09) & 1.87 (0.09) & 1.91 (0.18) & 1.97 (0.04) \\
\midrule
\multicolumn{5}{c}{Scenario 5 (Optimal value $2.21$)} \\
\midrule
$\ell_1$-PLS & 1.82(0.12) & 1.90(0.11) & 1.98(0.08) & 2.02(0.05) \\
Q-RF &  1.88 (0.07) & 1.97 (0.05) & 2.04 (0.03) & 2.09 (0.02)  \\
CNN & 1.85 (0.09) & 1.94 (0.06) & 2.02 (0.04) & 2.07 (0.02) \\
ACNN & \textbf{1.89 (0.10)} & \textbf{1.99 (0.08)} & \textbf{2.07 (0.06)} & \textbf{2.12 (0.03)} \\
\bottomrule
\end{tabular}
\begin{tablenotes}
\item $\ell_1$-PLS, $\ell_1$ penalized least squares; Q-RF, Q-learning using random forests;
CNN, causal $k$-nearest neighbor; ACNN, adaptive causal $k$-nearest neighbor.
\end{tablenotes}
\end{threeparttable}
\end{table}

We then increased the dimensionality to 25. The results are presented in Table~\ref{tab:valueknn3high}. Again, $\ell_1$ penalized least squares produced the best performance in Scenario 4, and the adaptive causal $k$-nearest neighbor method in Scenario 5. The causal $k$-nearest neighbor approach was not comparable with others in both scenarios due to the curse of dimensionality.

\begin{table}[t]
\centering
\caption{Mean (standard deviation) of empirical value functions evaluated on the test set for Scenarios 4 and 5 when the dimension is moderate ($p=25$). The best value function for each scenario and sample size combination is in bold.}
\label{tab:valueknn3high}
\begin{threeparttable}
\begin{tabular}{lcccc}
\addlinespace
\toprule
& $n=150$ & $n=300$ & $n=600$ & $n=1200$ \\
\midrule
\multicolumn{5}{c}{Scenario 4 (Optimal value $2.04$)} \\
\midrule
$\ell_1$-PLS &  \textbf{1.85(0.09)} & \textbf{1.94(0.07)} & \textbf{2.00(0.04)} & \textbf{2.02(0.02)} \\
Q-RF & 1.77 (0.06) & 1.83 (0.06) & 1.90 (0.04) & 1.95 (0.03) \\
CNN & 1.72 (0.04) & 1.76 (0.04) & 1.78 (0.04) & 1.82 (0.04) \\
ACNN & 1.72 (0.08) & 1.77 (0.09) & 1.83 (0.09) & 1.91 (0.07) \\
\midrule
\multicolumn{5}{c}{Scenario 5 (Optimal value $2.21$)} \\
\midrule
$\ell_1$-PLS & 1.72(0.08) & 1.79(0.09) & 1.90(0.08) & 1.99(0.05) \\
Q-RF & 1.72 (0.08) & 1.82 (0.08) & 1.92 (0.05) & 1.99 (0.04) \\
CNN & 1.66 (0.04) & 1.69 (0.03) & 1.73 (0.03) & 1.76 (0.03) \\
ACNN & \textbf{1.74 (0.12)} & \textbf{1.87 (0.13)} & \textbf{2.01 (0.10)} & \textbf{2.11 (0.04)} \\
\bottomrule
\end{tabular}
\begin{tablenotes}
\item $\ell_1$-PLS, $\ell_1$ penalized least squares; Q-RF, Q-learning using random forests;
CNN, causal $k$-nearest neighbor; ACNN, adaptive causal $k$-nearest neighbor.
\end{tablenotes}
\end{threeparttable}
\end{table}

When the dimension is low, the causal $k$-nearest neighbor regime produced comparable performance to other alternatives. The adaptive selection on the distance metric enhances the causal $k$-nearest neighbor regime. From the simulations, the adaptive causal $k$-nearest neighbor method showed at least similar results to the causal $k$-nearest neighbor regime. As we explained before, when the tuning parameter $\Delta$ is very small, the adaptive causal $k$-nearest neighbor approach is almost equivalent to the causal $k$-nearest neighbor approach. Considering the superior performance of the adaptive causal $k$-nearest neighbor over the causal $k$-nearest neighbor approach, especially when the dimensionality is large, we suggest the adaptive causal $k$-nearest neighbor method for general practical use.

\section{Data analysis}
We applied the proposed methods to analyze data from a chronic depression clinical trial \citep{Keller:depression}.
Patients with non-psychotic chronic major depressive disorder were randomized in a 1:1:1 ratio to either Nefazodone, cognitive behavioral-analysis system of psychotherapy, or the combination of two therapies.
The primary outcome measurement in efficacy was the score on the 24-item Hamilton rating scale for depression. Lower score is desirable. We considered 50 pre-treatment covariates as in \citet{Zhao:OWL2012}. We excluded some patients with missing covariate values from the analyses. The data used here consisted of 647 patients. Among them, 216, 220, and 211 patients were assigned to three arms, respectively. Each clinical covariate was scaled to $[-1,+1]$, as described in \citet{Hsu2003:svmguide}.

Since the trial had three treatment arms, we compared the performance of the adaptive causal $k$-nearest neighbor regime with $\ell_1$ penalized least squares and Q-learning using random forests. Residual weighted learning and augmented inverse probability weighted estimation methods can only deal with two treatments. From the simulation studies, the adaptive regime outperformed the causal $k$-nearest neighbor one especially when the dimension of covariates was large, so we only considered the adaptive regime in this section. Outcomes used in the analyses were opposites of the scores on the 24-item Hamilton rating scale for depression.
We used a nested 10-fold cross-validation procedure for an unbiased comparison \citep{Ambroise:bias}. 
To obtain reliable estimates, we repeated the nested cross-validation procedure 100 times with different fold partitions.

The mean value functions over 100 repeats and the standard deviations are presented in Table~\ref{tab:depressionknn}.
The adaptive causal $k$-nearest neighbor regime achieved a similar performance to $\ell_1$ penalized least squares and Q-learning using random forests. All methods assigned the combination treatment to almost every patient. The original analysis in \citet{Keller:depression} indicated that the combination treatment is significantly more efficacious than either treatment alone. Our analysis confirmed that this is indeed true.

We also performed pairwise comparisons between two treatment arms. We included two residual weighted learning methods with variable selection and augmented inverse probability weighted estimation in the analysis.
The analysis results are also presented in Table~\ref{tab:depressionknn}. 
For comparison between Nefazodone and cognitive behavioral-analysis system of psychotherapy, the adaptive causal $k$-nearest neighbor regime was slightly better than other methods except for residual weighted learning with linear kernel. For comparison between Nefazodone and combination therapy, all methods produced similar performance. For comparison between cognitive behavioral-analysis system of psychotherapy and combination therapy, the adaptive causal $k$-neareast neighbor regime did not perform comparably to other methods.
We carried out the significance test described in Section \ref{sec:adaptive} to compare the regimes by the adaptive causal $k$-nearest neighbor and residual weighted learning with linear kernel, and the difference between them was not statistically significant.

\begin{table}
\centering
\caption{Mean score (standard deviation) on Hamilton rating scale for depression from the cross-validation procedure using different methods. Lower score is better.}
\label{tab:depressionknn}
\begin{threeparttable}
\begin{tabular}{lcccc}
\addlinespace
\toprule
& NFZ vs CBASP & & &\\
& vs COMB & NFZ vs CBASP & NFZ vs COMB & CBASP vs COMB \\
\midrule
$\ell_1$-PLS & 11.19 (0.15) & 16.30 (0.39) & 11.20 (0.16) & 10.95 (0.09) \\
Q-RF & 11.11 (0.13) & 16.27 (0.44) & 11.05 (0.18) & 10.93 (0.09) \\
RWL-VS-Linear & $-$ & 15.45 (0.37) & 11.09 (0.29) & 10.88 (0.05) \\
RWL-VS-Gaussian & $-$ & 16.29 (0.44) & 11.33 (0.25) & 11.07 (0.28) \\
AIPWE & $-$ & 16.45 (0.41) & 10.97 (0.15) & 10.96 (0.14) \\
ACNN & 11.18 (0.27) & 15.70 (0.39) & 11.03 (0.27) & 11.41 (0.28) \\
\bottomrule
\end{tabular}
\begin{tablenotes}
\item $\ell_1$-PLS, $\ell_1$ penalized least squares \citep{Qian:ITR2011}; Q-RF, Q-learning using random forests \citep{Taylor2015:QRF}; RWL-VS-Linear, residual weighted learning with variable selection and linear kernel \citep{Zhou2015:RWL}; RWL-VS-Gaussian, residual weighted learning with variable selection and Gaussian kernel \citep{Zhou2015:RWL}; AIPWE, augmented inverse probability weighted estimation \citep{Zhang2012:ClassificationITR}; ACNN, adaptive causal $k$-nearest neighbor.
NFZ, Nefazodone; CBASP, cognitive behavioral-analysis system of psychotherapy; COMB, combination of Nefazodone and cognitive behavioral-analysis system of psychotherapy.
\end{tablenotes}
\end{threeparttable}
\end{table}

The adaptive causal $k$-nearest neighbor regime showed a statistically equivalent performance to other methods on the chronic depression clinical trial data.

\section{Discussion}
In this article, we have proposed a simple causal $k$-nearest neighbor method to optimal treatment regimes, and developed an adaptive method to determine the distance metric. As shown in the simulation and data studies, the adaptive method can rival and improve upon more sophisticated methods, especially when the decision boundary is nonlinear.

Variable selection plays a critical role in identifying the optimal treatment regime when the dimension of covariates is large, as shown in the simulation studies. $\ell_1$ penalized least squares methods use the least absolute shrinkage and selection operator for variable selection. Residual weighted learning performs variable selection through the elastic-net penalty for linear kernels and through covariate-scaling for Gaussian kernels \citep{Zhou2015:RWL}.
As a tree method, augmented inverse probability weighted estimation is equipped with a built-in variable selection mechanism \citep{Breiman:CART84}. Our proposed adaptive causal $k$-nearest neighbor method applies an adaptive distance metric to perform variable selection. Recently, several researchers highlighted the importance of variable selection for optimal treatment regimes \citep{Gunter2011:Qualitative,Zhou2015:RWL}. Variable selection in optimal treatment regimes has its own characteristics. There are two different types of covariates related to outcomes $R$, predictive and prescriptive covariates.
Predictive covariates are useful to the prediction of outcomes; and prescriptive covariates are used to prescribe optimal treatment regimes \citep{Gunter2011:Qualitative}. \citet{Athey2016:Partitioning} discussed several ways of splitting on prescriptive covariates rather than predictive covariates on causal trees.
The variable selection in the adaptive causal $k$-nearest neighbor regime is to identify prescriptive covariates through tuning with the additional parameter $\Delta$.
As pointed out by an anonymous reviewer, in practice, the number of predictive covariates may be much larger than the number of prescriptive covariates. So it is important and challenging to carry out variable selection for optimal treatment regimes. 

The causal $k$-nearest neighbor methods are simple and fast; they possess nice theoretical properties; as nonparametric methods, they are free of model specification; they naturally work with multiple-arm trials; the variable selection in the adaptive causal $k$-nearest neighbor regime identifies prescriptive covariates to further improve finite sample performance.

\section*{Acknowledgement}
This work was sponsored by the National Cancer Institute. We are grateful to the
editors and the reviewers for their insightful comments, which have led to important improvements in this paper.

\appendix
\makeatletter   
 \renewcommand{\@seccntformat}[1]{APPENDIX~{\csname the#1\endcsname}.\hspace*{1em}}
 \makeatother

\vspace{40pt}
\noindent \textbf{\LARGE APPENDIX}

\vspace{10pt}
\noindent We prove Theorems \ref{thm:knnconsistency} and \ref{thm:knnconvergence} of the main paper in Appendix A and B. The proofs are based on theoretical results for nearest neighbor rules in regression. For completeness, we collect the theorems and lemmas needed in the proofs in Appendix C. We present additional simulation results in Appendix D.

\section{Proof of Theorem \ref{thm:knnconsistency}}
The following lemma shows that consistency of $\hat{m}_\ell({x})$, $\ell=1,\cdots,L$, guarantees consistency of the rule $d^{CNN}$.
\begin{lemma} \label{thm:knnvaluebound}
The causal $k$-nearest neighbor rule in \eqref{eq:knnregimerule} of the main paper satisfies the following bound for any distribution $P$ for $({X},A,R)$,
\begin{equation*}
\mathcal{V}(d^*) - \mathcal{V}(d^{CNN}) \leq \sum_{\ell=1}^L\int|\hat{m}_\ell({x})-m_\ell({x})|\mu(d{x}).
\end{equation*}
\end{lemma}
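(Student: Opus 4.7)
The plan is to reduce the value comparison to a pointwise comparison between the true conditional means $m_\ell(x)$ and their estimates $\hat m_\ell(x)$, using the standard decomposition trick for plug-in classification rules.

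First I would rewrite the value function of an arbitrary regime $d$ in terms of $m_\ell$. Using the definition $\mathcal{V}(d) = \mathbb{E}(R^*(d)) = \mathbb{E}\bigl(\sum_{\ell=1}^L R^*(\ell)\mathbb{I}(d(X)=\ell)\bigr)$ together with the tower property and the definition $m_\ell(x)=\mathbb{E}(R^*(\ell)\mid X=x)$, one obtains
\begin{equation*}
\mathcal{V}(d) = \int \sum_{\ell=1}^L m_\ell(x)\,\mathbb{I}(d(x)=\ell)\,\mu(dx).
\end{equation*}
Subtracting the expressions for $d^*$ and $d^{CNN}$ yields
\begin{equation*}
\mathcal{V}(d^*) - \mathcal{V}(d^{CNN}) = \int \bigl[m_{d^*(x)}(x) - m_{d^{CNN}(x)}(x)\bigr]\,\mu(dx).
\end{equation*}

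Next I would bound the integrand pointwise. Fix $x$ and abbreviate $\ell^* = d^*(x)$ and $\hat\ell = d^{CNN}(x)$. Insert $\pm\hat m_{\ell^*}(x)$ and $\pm\hat m_{\hat\ell}(x)$ to decompose
\begin{equation*}
m_{\ell^*}(x) - m_{\hat\ell}(x) = \bigl[m_{\ell^*}(x)-\hat m_{\ell^*}(x)\bigr] + \bigl[\hat m_{\ell^*}(x)-\hat m_{\hat\ell}(x)\bigr] + \bigl[\hat m_{\hat\ell}(x)-m_{\hat\ell}(x)\bigr].
\end{equation*}
Because $\hat\ell$ is by definition the argmax of $\hat m_\ell(x)$, the middle bracket is non-positive. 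Bounding each of the remaining two brackets by its absolute value and then enlarging the sum to run over all treatments gives
\begin{equation*}
m_{\ell^*}(x) - m_{\hat\ell}(x) \leq \bigl|\hat m_{\ell^*}(x)-m_{\ell^*}(x)\bigr| + \bigl|\hat m_{\hat\ell}(x)-m_{\hat\ell}(x)\bigr| \leq \sum_{\ell=1}^L \bigl|\hat m_\ell(x)-m_\ell(x)\bigr|.
\end{equation*}

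Finally, integrating this inequality against $\mu$ and exchanging the finite sum with the integral yields the claimed bound. The argument is essentially a treatment-regime analogue of the classical plug-in classifier bound of \citet{Devroye:PatternRecog1996}, and I do not anticipate any real obstacle; the only minor point worth checking is that the inner argmax-based cancellation works verbatim under arbitrary tie-breaking conventions in the definition of $d^*$ and $d^{CNN}$, which it does, since the middle bracket is non-positive under any consistent argmax selection.
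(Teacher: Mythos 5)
Your proposal is correct and follows essentially the same route as the paper: conditioning on $X=x$ to reduce the value gap to $m_{d^*(x)}(x)-m_{d^{CNN}(x)}(x)$, then using the argmax property of $d^{CNN}$ (the paper's ``by construction'' step is exactly your non-positive middle bracket) to bound this by $\sum_{\ell=1}^L|\hat m_\ell(x)-m_\ell(x)|$ before integrating against $\mu$. No gaps to report.
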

\begin{proof}[Proof of Lemma \ref{thm:knnvaluebound}]
Note that the value function of any rule $d$,
\begin{equation*}
\mathcal{V}(d) := \bE\big(R^{\ast}(d(X))\big)=\sum_{\ell=1}^L\bE\big(R^{\ast}(\ell)\bI(d(X)=\ell)\big).
\end{equation*}
Thus, by fixing ${x}\in\mathcal{X}$, we have
\begin{eqnarray*}
&&\sum_{\ell=1}^L\bE\Big(R^{\ast}(\ell)\bI(d^{\ast}(X)=\ell)\big|{X}={x}\Big)
-
\sum_{\ell=1}^L\bE\Big(R^{\ast}(\ell)\bI(d^{CNN}(X)=\ell)\big|{X}={x}\Big)\\
&=&\sum_{\ell=1}^Lm_\ell({x})\left(\mathbb{I}(d^*({x})=\ell)-\mathbb{I}(d^{CNN}({x})=\ell)\right) \\
&=&m_{\ell_1}({x}) - m_{\ell_2}({x}),
\end{eqnarray*}
where $\ell_1=d^*({x})$ and $\ell_2=d^{CNN}({x})$, and the expectation $\mathbb{E}$ is with respect to $P$ for $({X},A,R^{\ast}(\ell),\ell=1,\cdots,L)$. By the construction of $d^{CNN}({x})$, we have
\begin{eqnarray*}
& & m_{\ell_1}({x}) - m_{\ell_2}({x})\\
&\leq& \left(m_{\ell_1}({x}) - \hat{m}_{\ell_1}({x})\right) - \left(m_{\ell_2}({x}) - \hat{m}_{\ell_2}({x})\right)\\
&\leq&\sum_{\ell=1}^L\left|m_{\ell}({x}) - \hat{m}_{\ell}({x})\right|.
\end{eqnarray*}
The desired result follows by taking expectation over ${X}$ on both sides.
\end{proof}

Now it is sufficient to prove, for any $\ell\in\{1,\cdots,L\}$,
\begin{equation*}
\int|\hat{m}_{\ell}({x})-m_{\ell}({x})|\mu(d{x})\rightarrow 0
\end{equation*}
in probability or almost surely, as $n\rightarrow\infty$. We start from a simpler $k$-nearest neighbor rule, for $\ell\in\{1,\cdots,L\}$,
\begin{equation} \label{eq:simplerknn}
\hat{m}'_\ell({x}) = \sum_{i=1}^kR_{(i,n)}({x})\frac{\bI(A_{(i,n)}({x})=\ell)}{k\pi_\ell\left({X}_{(i,n)}({x})\right)}.
\end{equation}
The relationship between $\hat{m}_\ell({x})$ and $\hat{m}'_\ell({x})$ is that
\begin{equation*}
\hat{m}_\ell({x}) = \frac{\hat{m}'_\ell({x})}{\frac{1}{k}{\sum_{i=1}^k\frac{\bI(A_{(i,n)}({x})=\ell)}{\pi_\ell\left({X}_{(i,n)}({x})\right)}}}.
\end{equation*}
By the law of large numbers, the denominator
\begin{equation*}
\frac{1}{k}{\sum_{i=1}^k\frac{\bI(A_{(i,n)}({x})=\ell)}{\pi_\ell\left({X}_{(i,n)}({x})\right)}}\rightarrow 1 \quad \textrm{a.s.}
\end{equation*}
as $k\rightarrow\infty$. Thus it is now sufficient to prove, for any $\ell\in\{1,\cdots,L\}$,
\begin{equation*}
\int|\hat{m}'_{\ell}({x})-m_{\ell}({x})|\mu(d{x})\rightarrow 0
\end{equation*}
in probability or almost surely, as $n\rightarrow\infty$.

From now on, we use $m_{\ell}({x})=\bE(R|X=x,A=\ell)$. For weak consistency, we will prove a slightly stronger result,
$
\bE\left(\int|\hat{m}'_{\ell}({x})-m_{\ell}({x})|\mu(d{x})\right) \rightarrow 0.
$
We rewrite $\hat{m}'_{\ell}({x})$ as
\begin{equation*}
\hat{m}'_\ell({x}) = \sum_{i=1}^n V_{n,i}^\ell({x}) R_{(i,n)}({x}),
\end{equation*}
where the weights are
\begin{equation*}
V_{n,i}^\ell({x})=\left\{\begin{array}{cc}
\frac{\bI(A_{(i,n)}({x})=\ell)}{k\pi_\ell\left({X}_{(i,n)}({x})\right)}, & {\rm if\ } i\leq k,\\
0 & {\rm if\ } i>k.
\end{array}\right.
\end{equation*}
Note that the $V_{n,i}^\ell({x})$'s depend on $X_1$, $\cdots$, $X_n$, $A_1$, $\cdots$, $A_n$. For the $k$-nearest neighbor regression, the weights depend on $X_1$, $\cdots$, $X_n$. Thus the theoretical results in $k$-nearest neighbor regression may not apply to our settings for optimal treatment regimes.

We proceed by checking a couple of conditions as in Stone's Theorem in Appendix \ref{sec:stone}, and then prove the weak consistency for the optimal treatment regime settings.
\begin{enumerate}
\item[(i)] There is a constant $c$ such that for every nonnegative measurable
function $f$ satisfying $\mathbb{E}f({X})<\infty$ and any $n$,
\begin{equation*}
\mathbb{E}\left\{\sum_{i=1}^n|V^{\ell}_{n,i}({X})|f({X}_i)\right\}\leq c\mathbb{E}f({X}).
\end{equation*}
\begin{proof}[Proof:]
\begin{equation*}
\mathbb{E}\left\{\sum_{i=1}^nV_{n,i}^\ell({X})f({X}_i)\right\} \leq  \frac{1}{k\zeta}\mathbb{E}\left\{\sum_{i=1}^k f({X}_{(i,n)}({X}))\right\}
\leq \frac{\gamma_d}{\zeta}\mathbb{E}(f({X})).
\end{equation*}
The last inequality is due to Lemma \ref{thm:neighborexpectation} in Appendix \ref{sec:stone}.
\end{proof}
\item[(ii)] For all $\delta>0$,
\begin{equation*}
\lim_{n\rightarrow\infty}\mathbb{E}\left\{\sum_{i=1}^n|V^\ell_{n,i}({X})|\mathbb{I}(||{X}_i-{X}||>\delta)\right\}=0.
\end{equation*}
\begin{proof}[Proof:]
\begin{eqnarray*}
&& \mathbb{E}\left\{\sum_{i=1}^n|V_{n,i}^\ell({X})|\mathbb{I}(||{X}_i-{X}||>a)\right\}\\
&= & \int\bE\left\{\sum_{i=1}^n|V_{n,i}^\ell({x})|\mathbb{I}(||{X}_i-{x}||>a)\right\}\mu(d{x})\\
&\leq &
\int\bE\left\{\frac{1}{k\zeta}\sum_{i=1}^k\mathbb{I}(||{X}_{(i,n)}({x})-{x}||>a)\right\}\mu(d{x})\\
&\leq &\frac{1}{\zeta}
\int P(||{X}_{(k,n)}({x})-{x}||>a)\mu(d{x}).
\end{eqnarray*}
For ${x}\in support(\mu)$, when $k/n\rightarrow0$, Lemma \ref{thm:neighborhood} in Appendix \ref{sec:stone} implies $P(||{X}_{(k,n)}({x})-{x}||>a)\rightarrow0$. Then the dominated convergence theorem implies condition (ii).
\end{proof}
\end{enumerate}
Now we are ready to prove $\bE\left(\int|\hat{m}'_{\ell}({x})-m_{\ell}({x})|\mu(d{x})\right) \rightarrow 0$. Fixing $x\in\mathcal{X}$, we have
\begin{eqnarray*}
&&|\hat{m}'_{\ell}({x})-m_{\ell}({x})|=\left|\sum_{i=1}^n V_{n,i}^\ell({x}) R_{(i,n)}({x})-m_{\ell}({x})\right|\\
&\leq& \left|\sum_{i=1}^n V_{n,i}^\ell({x})\Big(R_{(i,n)}({x})-m_{\ell}\big(X_{(i,n)}({x})\big)\Big)\right|+\left|\sum_{i=1}^n V_{n,i}^\ell({x})\Big(m_{\ell}\big(X_{(i,n)}({x})\big)-m_{\ell}({x})\Big)\right| \\
& &+\left|\left(\sum_{i=1}^n V_{n,i}^\ell({x})-1\right)m_{\ell}({x})\right|\\
&=&I_{n1}(x)+I_{n2}(x)+I_{n3}(x).
\end{eqnarray*}
Note that $\sum_{i=1}^n V_{n,i}^\ell({x})\rightarrow1$ almost surely and $\sum_{i=1}^n V_{n,i}^\ell({x})\leq1/\zeta$. Then $\bE(\int I_{n3}(x)\mu(d{x}))\rightarrow 0$ by the dominated convergence theorem.

For the first term $I_{n1}$,
\begin{eqnarray*}
&&\bE\int\left|\sum_{i=1}^n V_{n,i}^\ell({x})\Big(R_{(i,n)}({x})-m_{\ell}\big(X_{(i,n)}({x})\big)\Big)\right|\mu(d{x})\\
&\leq& \frac{1}{\zeta}\bE\int\left|\frac{1}{k}\sum_{i=1}^k \bI(A_{(i,n)}({x})=\ell)\Big(R_{(i,n)}({X})-m_{\ell}\big(X_{(i,n)}({X})\big)\Big)\right|\mu(d{x}).
\end{eqnarray*}
By the law of large numbers, as $k\rightarrow\infty$, $\frac{1}{k}\sum_{i=1}^k \bI(A_{(i,n)}({x})=\ell)\Big(R_{(i,n)}({X})-m_{\ell}\big(X_{(i,n)}({X})\big)\Big)\rightarrow0$ almost surely. Then by the dominated convergence theorem, $\bE(\int I_{n1}(x)\mu(d{x}))\rightarrow 0$.

Because of Theorem A.1 in \citet[page 589]{Gyorfi2002:Nonparametric}, for $\epsilon>0$, we can choose $m'_{\ell}(x)$ bounded and uniformly
continuous such that $\int|m'_{\ell}({x})-m_{\ell}({x})|\mu(dx)<\epsilon$.  For the second term $I_{n2}$, we have
\begin{eqnarray*}
I_{n2}(x)&\leq & \left|\sum_{i=1}^n V_{n,i}^\ell({x})\Big(m_{\ell}\big(X_{(i,n)}({x})\big)-m'_{\ell}\big(X_{(i,n)}({x})\big)\Big)\right| + \left|\sum_{i=1}^n V_{n,i}^\ell({x})\Big(m'_{\ell}\big(X_{(i,n)}({x})\big)-m'_{\ell}({x})\Big)\right|\\
&& + \left|\sum_{i=1}^n V_{n,i}^\ell({x})\Big(m'_{\ell}({x})-m_{\ell}({x})\Big)\right| = J_{n1}(x)+J_{n2}(x)+J_{n3}(x).
\end{eqnarray*}
By the construction of $m'_{\ell}(x)$, for $J_{n3}(x)$, we have,
\begin{equation*}
\bE\int J_{n3}(x)\mu(dx) \leq \frac{1}{\zeta}\int\Big(m'_{\ell}({x})-m_{\ell}({x})\Big)\mu(dx)\leq \frac{\epsilon}{\zeta}.
\end{equation*}
For the term $J_{n1}(x)$, by condition (ii), we have
\begin{equation*}
\bE\int J_{n1}(x)\mu(dx) \leq c \int\Big(m'_{\ell}({x})-m_{\ell}({x})\Big)\mu(dx)\leq c\epsilon.
\end{equation*}

Because of uniform continuity of $m'_{\ell}(x)$, we can find a $\delta$ such that $|m'_{\ell}(x)-m'_{\ell}(y)|\leq\epsilon$ for any $x$ and $y\in\mathcal{X}$ satisfying $||x-y||\leq\delta$. For the term $J_{n2}(x)$, we have
\begin{eqnarray*}
\bE\int J_{n2}(x)\mu(dx) &\leq& \bE\left|\sum_{i=1}^n V_{n,i}^\ell({X})\Big(m'_{\ell}\big(X_{(i,n)}({X})\big)-m'_{\ell}({X})\Big)\bI(|X_{(i,n)}({X})-X|>\delta)\right|\\
&&+\bE\int\left|\sum_{i=1}^n V_{n,i}^\ell({x})\Big(m'_{\ell}\big(X_{(i,n)}({x})\big)-m'_{\ell}({x})\Big)\bI(|X_{(i,n)}({x})-x|\leq\delta)\right|\mu(dx) \\
&\leq& 2\sup_{x\in\mathcal{X}}\Big(m'_{\ell}(x)\Big)\bE\left|\sum_{i=1}^n V_{n,i}^\ell({X})\bI(|X_{(i,n)}({X})-X|>\delta)\right|+\frac{\epsilon}{\zeta}.
\end{eqnarray*}
By condition (ii), we have,
\begin{equation*}
\limsup_{n\rightarrow\infty}\bE\int J_{n2}(x)\mu(dx)\leq \frac{\epsilon}{\zeta}.
\end{equation*}
So combining the terms for $J_{n1}(x)$, $J_{n2}(x)$ and $J_{n3}(x)$, when $\epsilon\rightarrow0$, we have
$\bE(\int I_{n2}(x)\mu(d{x}))\rightarrow 0$. Now we finish the proof for $\bE\left(\int|\hat{m}'_{\ell}({x})-m_{\ell}({x})|\mu(d{x})\right) \rightarrow 0$. The weak consistency of
the causal $k$-nearest neighbor regime (3) in the main paper follows by Lemma \ref{thm:knnvaluebound}.

We next show strong consistency for bounded outcomes. By the Borel-Cantelli Lemma, it suffices to show the following theorem. The proof follows an idea in  \citet[Chapter 11]{Devroye:PatternRecog1996}.
\begin{theorem} \label{thm:strongbound}
For any distribution $P$ for $({X},A,R)$ satisfying assumptions (A1), (A3), and $|R|\leq M<\infty$ for some constant $M$, if $k\rightarrow\infty$ and $k/n\rightarrow0$, then for every $\epsilon>0$ there exists an $n_0(\epsilon)$ such that for $n \geq n_0$
\begin{equation*}
P\left(\int|\hat{m}'_{\ell}({x})-m_{\ell}({x})|d({x})>\epsilon\right)\leq 2\exp(-cn\epsilon^2),
\end{equation*}
where $c>0$ depends only on the dimension $p$, $M$ and $\zeta$.
\end{theorem}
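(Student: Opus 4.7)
The plan is to apply McDiarmid's bounded-differences inequality to
\[
F_n := \int |\hat{m}'_{\ell}(x) - m_{\ell}(x)|\, \mu(dx),
\]
viewed as a function of the i.i.d.\ tuples $Z_i = (X_i, A_i, R_i)$, and then combine the resulting exponential concentration with the fact $\mathbb{E} F_n \to 0$ already established in the weak consistency argument preceding the theorem. Strong consistency will follow by summing the probability bound over $n$ via Borel--Cantelli.

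The first step is to bound the influence $c_i := \sup |F_n - F_n^{(i)}|$, where $F_n^{(i)}$ is obtained by replacing $Z_i$ with an independent copy $Z_i'$ while keeping the other $Z_j$ fixed. Under (A1) and $|R|\le M$, every weight $\bI(A_{(j,n)}=\ell)/(k\pi_\ell(X_{(j,n)}))$ is bounded by $1/(k\zeta)$, and the replacement alters at most one summand of $\hat{m}'_{\ell}(x)$; hence the pointwise change is at most $2M/(k\zeta)$, and is nonzero only on $T_{n,i} \cup T_{n,i}'$, where $T_{n,i} := \{x \in \mathcal{X}: X_i \in k\text{-NN}(x)\}$. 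Integrating yields
\[
|F_n - F_n^{(i)}| \le \frac{2M}{k\zeta}\bigl(\mu(T_{n,i}) + \mu(T_{n,i}')\bigr).
\]

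The second step is to aggregate using Stone's geometric lemma (available in Appendix C). The trivial identity $\sum_{i=1}^n \bI(X_i \in k\text{-NN}(x)) = k$ for every $x$ integrates to $\sum_i \mu(T_{n,i}) = k$ by Fubini, and Stone's lemma then forces $\max_i \mu(T_{n,i})$ to lie on the order of $k/n$ up to the dimension-only constant $\gamma_p$, after lifting from the empirical to the underlying $\mu$-measure by conditioning on $X_1,\ldots,X_n$ and discretizing the compact support via a fine net (assumption (A3) excludes ties that would otherwise spoil this reduction). Combining, $\sum_i c_i^2 \le C/n$ for a constant $C = C(p,M,\zeta)$, and McDiarmid delivers $P(|F_n - \mathbb{E}F_n| > \epsilon/2) \le 2\exp(-c n \epsilon^2)$. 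Since $\mathbb{E} F_n < \epsilon/2$ for all $n \ge n_0(\epsilon)$ by the weak consistency step, the stated bound follows.

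The main obstacle is the last lifting step: Stone's lemma directly bounds only the \emph{empirical} count of sample points whose $k$-NN contains $X_i$, so upgrading this to a bound on the true measure $\mu(T_{n,i})$ requires additional work. The standard device is to condition on $X_1, \ldots, X_n$ and then either apply a covering argument on the compact support of $\mu$ with a union bound, or use an auxiliary McDiarmid on the mapping $(X_1,\ldots,X_n) \mapsto \mu(T_{n,i})$, following the template of Chapter 11 of \citet{Devroye:PatternRecog1996}.
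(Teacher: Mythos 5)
Your overall strategy (McDiarmid plus $\bE F_n \to 0$, then Borel--Cantelli) matches the spirit of the paper's proof, but the concentration step has a genuine gap: McDiarmid's inequality needs deterministic, worst-case bounded-difference constants, whereas your bound $|F_n-F_n^{(i)}|\le \tfrac{2M}{k\zeta}\bigl(\mu(T_{n,i})+\mu(T'_{n,i})\bigr)$ involves the random quantities $\mu(T_{n,i})$, and these cannot be bounded by $O(k/n)$ uniformly over sample configurations. If the other $n-1$ points happen to lie in a region of negligible $\mu$-measure, then $X_i$ is among the $k$ nearest neighbors of $x$ for a set of $x$ of $\mu$-measure close to one, so the worst-case influence of a single coordinate is of order $M/(k\zeta)$, which would only give a bound of order $\exp(-c\epsilon^2 k^2/n)$, not $\exp(-cn\epsilon^2)$. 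Stone's geometric lemma controls the \emph{empirical} count (at most $k\gamma_p$ sample points can have $X_i$ among their $k$ nearest neighbors, deterministically), but this does not lift to the $\mu$-measure of $T_{n,i}$ by conditioning, a net argument, or an auxiliary McDiarmid---the inequality $\mu(T_{n,i})\lesssim \gamma_p k/n$ you need is simply false for bad configurations. The only available statement of that type, Lemma \ref{thm:neighborhoodB} ($\limsup_n (n/k)\max_i\mu(B_{i,s})\le 2$ a.s.), is asymptotic and probabilistic, requires $k/\log n\to\infty$ (not assumed in this theorem), and in any case cannot serve as a McDiarmid constant; in the paper it is used only for the unbounded-outcome part of Theorem \ref{thm:knnconsistency}.

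The paper's proof circumvents exactly this obstacle by replacing the random $k$-NN radius with the deterministic radius $\rho^*_n(x)$ defined by $\mu(S_{x,\rho^*_n(x)})=k/n$, introducing the surrogate estimate $\hat m^*_\ell(x)=\sum_{i=1}^n R_i\,\bI(A_i=\ell)\,\bI(\|x-X_i\|\le\rho^*_n(x))/\{k\pi_\ell(X_i)\}$ together with $\hat s(x)=k^{-1}\sum_{i=1}^n\bI(X_i\in S_{x,\rho^*_n(x)})$, and using the pointwise bound $|\hat m'_\ell(x)-m_\ell(x)|\le (M/\zeta)|\hat s(x)-1|+|\hat m^*_\ell(x)-m_\ell(x)|$. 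For these two integrals the influence of replacing one sample is deterministically confined to $\{x:\|x-x_i\|\le\rho^*_n(x)\}=\{x:\mu(S_{x,\|x-x_i\|})\le k/n\}$, whose $\mu$-measure is at most $\gamma_p k/n$ by Lemma \ref{thm:neighborcovering}; this yields bounded differences of order $\gamma_p M/(n\zeta)$ and hence the stated $\exp(-cn\epsilon^2)$ rate, while the centering terms are controlled by $\bE\int|\hat s(x)-1|\mu(dx)\le k^{-1/2}$ and the previously established $\bE\int|\hat m'_\ell(x)-m_\ell(x)|\mu(dx)\to 0$, which fixes $n_0(\epsilon)$. To repair your outline you would need to introduce this deterministic-radius device (or an equivalent one) before invoking McDiarmid; as written, the bounded-differences step does not go through.
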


REMARK: The inequality in Theorem \ref{thm:strongbound} does not imply a $\sqrt{n}$-consistent rate since it is only valid when $n\geq n_0$, where $n_0$ depends on $\epsilon$.
\begin{proof}[Proof of Theorem \ref{thm:strongbound}:]
Fix ${x}\in\mathcal{X}$.
Denote $\rho_n({x})=||{x}-{X}_{(k,n)}({x})||$. Also define $\rho^*_n({x})$ as the solution of the equation ${k}/{n}=\mu(S_{{x},\rho^*_n({x})})$. Since distance ties occur with probability zero in $\mu$, the solution always exists. Now define the rule
\begin{equation*}
\hat{m}^*_\ell({x}) = \sum_{i=1}^nR_i\frac{\bI(A_i=\ell)}{k\pi_\ell({X}_i)}\bI\left(||{x}-{X}_i||\leq\rho^*_n({x})\right),
\end{equation*}
and consider the following decomposition,
\begin{equation*}
|\hat{m}'_{\ell}({x})-m_{\ell}({x})|\leq|\hat{m}'_{\ell}({x})-\hat{m}^*_{\ell}({x})|+|\hat{m}^*_{\ell}({x})-m_{\ell}({x})|.
\end{equation*}
For the first term on the right-hand side, we obtain,
\begin{eqnarray*}
& & |\hat{m}'_{\ell}({x})-\hat{m}^*_{\ell}({x})| \\
& = & \frac{1}{k}\Big|\sum_{i=1}^nR_i\frac{\bI(A_i=\ell)}{\pi_\ell({X}_i)}\bI\left(||{x}-{X}_i||\leq\rho_n({x})\right)-\sum_{i=1}^nR_i\frac{\bI(A_i=\ell)}{\pi_\ell({X}_i)}\bI\left(||{x}-{X}_i||\leq\rho^*_n({x})\right)\Big|\\
&=&
\frac{1}{k}\Big|\sum_{i=1}^nR_i\frac{\bI(A_i=\ell)}{\pi_\ell({X}_i)}\Big(\bI\left(||{x}-{X}_i||\leq\rho_n({x})\right)-\bI\left(||{x}-{X}_i||\leq\rho^*_n({x})\right)\Big)\Big|\\
&\leq& \frac{1}{k}\sum_{i=1}^n\Big|R_i\frac{\bI(A_i=\ell)}{\pi_\ell({X}_i)}\Big(\bI\left(||{x}-{X}_i||\leq\rho_n({x})\right)-\bI\left(||{x}-{X}_i||\leq\rho^*_n({x})\right)\Big)\Big|\\
& \leq & \frac{M}{k\zeta}\sum_{i=1}^n\Big|\bI\left(||{x}-{X}_i||\leq\rho_n({x})\right)-\bI\left(||{x}-{X}_i||\leq\rho^*_n({x})\right)\Big|\\
& = & \frac{M}{\zeta}\Big|\frac{1}{k}\sum_{i=1}^n\bI\left(||{x}-{X}_i||\leq\rho^*_n({x})\right)-1\Big|
=\frac{M}{\zeta}\Big|\frac{1}{k}\sum_{i=1}^n\bI\left({X}_i\in S_{{x},\rho^*_n({x})}\right)-1\Big|.
\end{eqnarray*}
Denote $\hat{s}({x})=\frac{1}{k}\sum_{i=1}^n\bI\left({X}_i\in S_{{x},\rho^*_n({x})}\right)$.
Thus,
\begin{equation} \label{eq:boundm}
|\hat{m}'_{\ell}({x})-m_{\ell}({x})|\leq\frac{M}{\zeta}|\hat{s}({x})-1|+|\hat{m}^*_{\ell}({x})-m_{\ell}({x})|.
\end{equation}

Observe that $\bE(\hat{s}({x}))=1$, then we have,
\begin{eqnarray*}
\bE\left\{\int|\hat{s}({x})-1|\mu(d{x})\right\}
& \leq & \int\sqrt{\bE\left\{\big(\hat{s}({x})-1\big)^2\right\}}\mu(d{x}) \\
&=&\int\sqrt{\frac{n}{k^2}\textrm{Var}\left(I({X}\in S_{{x},\rho^*_n({x})})\right)}\mu(d{x}) \leq \frac{1}{\sqrt{k}}.
\end{eqnarray*}
Thus we obtain,
\begin{eqnarray*}
\lim_{n\rightarrow\infty}\bE\left(\int|\hat{s}({x})-1|\mu(d{x})\right) & = & 0,\\
\textrm{and } \quad \lim_{n\rightarrow\infty}\bE\left(\int|\hat{m}'_{\ell}({x})-\hat{m}^*_{\ell}({x})|\mu(d{x})\right) & = & 0.
\end{eqnarray*}
We already showed that
\begin{equation*}
\lim_{n\rightarrow\infty}\bE\left(\int|\hat{m}'_{\ell}({x})-m_{\ell}({x})|\mu(d{x})\right) = 0.
\end{equation*}
So we have,
\begin{equation*}
\lim_{n\rightarrow\infty}\bE\left(\int|\hat{m}^*_{\ell}({x})-m_{\ell}({x})|\mu(d{x})\right) = 0.
\end{equation*}
Fix $\epsilon>0$. Then we can find an $n_0$ such that, for $n\geq n_0$,
\begin{eqnarray*}
\bE\left(\int|\hat{s}({x})-1|\mu(d{x})\right) & < & \frac{\zeta}{8M}\epsilon,\\
\textrm{and } \quad \bE\left(\int|\hat{m}^*_{\ell}({x})-m_{\ell}({x})|\mu(d{x})\right) & < & \frac{\epsilon}{8}.
\end{eqnarray*}
Then, by (\ref{eq:boundm}), we have, when $n\geq n_0$,
\begin{eqnarray} \label{eq:probboundm}
&&P(\int|\hat{m}'_{\ell}({x})-m_{\ell}({x})|\mu(d{x})>\epsilon) \\ \nonumber &\leq&P\left(\int|\hat{s}({x})-1|\mu(d{x})-\bE\int|\hat{s}({x})-1|\mu(d{x})>\frac{\zeta}{4M}\epsilon\right)\\
\nonumber &&+P\left(\int|\hat{m}^*_{\ell}({x})-m_{\ell}({x})|\mu(d{x})-\bE\int|\hat{m}^*_{\ell}({x})-m_{\ell}({x})|\mu(d{x})>\frac{1}{2}\epsilon\right).
\end{eqnarray}
We will use McDiarmid's inequality \citep[Theorem 9.2]{Devroye:PatternRecog1996} to bound each term on the right-hand side of (\ref{eq:probboundm}). Fix an arbitrary realization of the data $({x}_j,a_j,r_j)_{j=1}^n$. Replace $({x}_i,a_i,r_i)$ by $({x}'_i,a'_i,r'_i)$, changing the value of $\hat{m}^*_{\ell}({x})$ to $\hat{m}^*_{\ell,i}({x})$. Thus
\begin{equation*}
\Big|\int|\hat{m}^*_{\ell}({x})-m_{\ell}({x})|\mu(d{x})-\int|\hat{m}^*_{\ell,i}({x})-m_{\ell}({x})|\mu(d{x})\Big|
\leq
\int|\hat{m}^*_{\ell}({x})-\hat{m}^*_{\ell,i}({x})|\mu(d{x}).
\end{equation*}
And
\begin{equation*}
|\hat{m}^*_{\ell}({x})-\hat{m}^*_{\ell,i}({x})| =
\frac{1}{k}\Big|r_i\frac{\bI(a_i=\ell)}{\pi_\ell({x}_i)}\bI\left(||{x}-{x}_i||\leq\rho^*_n({x})\right)-r'_i\frac{\bI(a'_i=\ell)}{\pi_\ell({x}'_i)}\bI\left(||{x}-{x}'_i||\leq\rho^*_n({x})\right)\Big|
\end{equation*}
is bounded by $2M/(k\zeta)$, and can differ from zero only if $||{x}-{x}_i||\leq\rho^*_n({x})$ or $||{x}-{x}'_i||\leq\rho^*_n({x})$. Note that $||{x}-{x}_i||\leq\rho^*_n({x})$ if and only if $\mu(S_{{x},||{x}-{x}_i||})\leq k/n$. By Lemma \ref{thm:neighborcovering}, the measure of such ${x}$ is bounded by $\gamma_pk/n$. Thus by McDiarmid's inequality,
\begin{equation*}
P\left(\int|\hat{m}^*_{\ell}({x})-m_{\ell}({x})|\mu(d{x})-\bE\int|\hat{m}^*_{\ell}({x})-m_{\ell}({x})|\mu(d{x})>\frac{1}{2}\epsilon\right)\leq \exp\left(-\frac{n\epsilon^2\zeta^2}{32M^2\gamma_p^2}\right).
\end{equation*}
Similarly,
\begin{equation*}
\Big|\int|\hat{s}({x})-1|\mu(d{x})-\int|\hat{s}_i({x})-1|\mu(d{x})\Big|
\leq
\int|\hat{s}({x})-\hat{s}_i({x})|\mu(d{x}),
\end{equation*}
and
\begin{equation*}
|\hat{s}({x})-\hat{s}_i({x})|=\frac{1}{k}
\Big|\bI\left(||{x}-{x}_i||\leq\rho^*_n({x})\right)-\bI\left(||{x}-{x}'_i||\leq\rho^*_n({x})\right)\Big|
\end{equation*}
is bounded by $1/k$. By McDiarmid's inequality again,
\begin{equation*}
P\left(\int|\hat{s}({x})-1|\mu(d{x})-\bE\int|\hat{s}({x})-1|\mu(d{x})>\frac{\zeta}{4M}\epsilon\right)
\leq \exp\left(-\frac{n\epsilon^2\zeta^2}{32M^2\gamma_p^2}\right).
\end{equation*}
The desired result follows from (\ref{eq:probboundm}) with $\displaystyle c=\frac{\zeta^2}{32M^2\gamma_p^2}$. 
\end{proof}
Now we prove (ii), strong consistency for unbounded $R$. A counterpart of Lemma 5 in \citet{Devroye1994:StrongConsistency} is needed for the setting of optimal treatment regimes. The proof follows the idea in \citet{Gyorfi1991:unbounded}.
\begin{lemma} \label{thm:unboundedR}
Consider the $k$-nearest neighbor estimate $\hat{m}'_{\ell}({x})$ in (\ref{eq:simplerknn}). Then
\begin{equation*}
\int|\hat{m}'_{\ell}({x})-m_{\ell}({x})|\mu(d{x})\rightarrow 0
\end{equation*}
almost surely for all distributions of $({X},A,R)$ satisfying assumptions (A1)$\sim$(A3) if the following two conditions are satisfied:
\begin{itemize}
\item[(a)] $\int|\hat{m}'_{\ell}({x})-m_{\ell}({x})|\mu(d{x})\rightarrow 0$ almost surely for all distributions of $({X},A,R)$ satisfying assumptions (A1) and (A3) with bounded $R$.
\item[(b)] There exists a constant $c>0$ such that, for all distributions of $({X},A,R)$ satisfying assumptions (A2) and (A3),
\begin{equation} \label{eq:limsupbound}
\limsup_{n\rightarrow\infty}\frac{1}{k}\sum_{i=1}^k\int|R_{(i,n)}({x})|\mu(d{x})\leq c\bE|R|  \quad a.s.
\end{equation}
\end{itemize}
\end{lemma}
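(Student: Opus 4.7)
The plan is a standard truncation argument to reduce the unbounded case to the bounded case covered by (a), with the tail controlled by (b). Fix $L > 0$ and split the response as $R = R_L + R^{-L}$, where $R_L = R\,\bI(|R| \le L)$ is bounded and $R^{-L} = R\,\bI(|R| > L)$. Define correspondingly the conditional means $m^L_\ell(x) = \bE(R_L\mid X=x, A=\ell)$ and $m^{-L}_\ell(x) = \bE(R^{-L}\mid X=x, A=\ell)$, and the estimators $\hat m^{\prime L}_\ell(x)$ and $\hat m^{\prime -L}_\ell(x)$ obtained by substituting $R_L$ and $R^{-L}$ into \eqref{eq:simplerknn}. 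By linearity and the triangle inequality,
\begin{equation*}
\int |\hat m'_\ell(x) - m_\ell(x)|\,\mu(dx) \le \int |\hat m^{\prime L}_\ell(x) - m^L_\ell(x)|\,\mu(dx) + \int |\hat m^{\prime -L}_\ell(x)|\,\mu(dx) + \int |m^{-L}_\ell(x)|\,\mu(dx).
\end{equation*}

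For each fixed $L$, the truncated triple $(X, A, R_L)$ satisfies (A1) and (A3) (these involve only $X$ and the propensity scores) and has $|R_L|\le L$, so condition (a) forces the first summand to tend to zero almost surely. The third summand is deterministic; by Jensen's inequality followed by the tower property and (A1), one checks
\begin{equation*}
\int |m^{-L}_\ell(x)|\,\mu(dx) \le \int \bE\bigl(|R|\,\bI(|R|>L)\,\big|\,X=x,A=\ell\bigr)\,\mu(dx) \le \zeta^{-1}\,\bE\{|R|\,\bI(|R|>L)\},
\end{equation*}
which vanishes as $L\to\infty$ by (A2) and dominated convergence.

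The middle term is where condition (b) enters. The crude bound $|\hat m^{\prime -L}_\ell(x)| \le (k\zeta)^{-1}\sum_{i=1}^k |R_{(i,n)}(x)|\,\bI(|R_{(i,n)}(x)|>L)$ reduces the problem to controlling the right-hand side in integral. Apply condition (b) with the original response replaced by the new variable $R' = |R|\,\bI(|R|>L)$ (the $X$-ordering is unchanged, and $(X,A,R')$ still satisfies (A2)--(A3)) to obtain
\begin{equation*}
\limsup_{n\to\infty}\int |\hat m^{\prime -L}_\ell(x)|\,\mu(dx) \le \frac{c}{\zeta}\,\bE\{|R|\,\bI(|R|>L)\} \quad \text{almost surely},
\end{equation*}
which again vanishes as $L\to\infty$.

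Combining the three bounds, for each fixed positive integer $L$ there is an almost-sure event $\Omega_L$ on which $\limsup_n \int |\hat m'_\ell - m_\ell|\,\mu \le (c+1)\zeta^{-1}\,\bE\{|R|\,\bI(|R|>L)\}$. Taking the intersection $\bigcap_L \Omega_L$ over $L \in \{1,2,\ldots\}$ yields a single full-measure event on which one may let $L\to\infty$ to conclude $\int |\hat m'_\ell - m_\ell|\,\mu \to 0$ almost surely. The main subtlety I expect to manage is exactly this uniformity-in-$L$ point: conditions (a) and (b) are a.s.\ statements, so one must apply them along a countable sequence of truncation levels in order to union the exceptional sets without losing measure. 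The only other bookkeeping step requiring care is the explicit $\zeta^{-1}$-bound on $\int|m^{-L}_\ell|\,\mu$ via the positivity assumption.
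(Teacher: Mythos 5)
Your proposal is correct and follows essentially the same route as the paper: truncate the response, split into three terms, apply condition (a) to the bounded part, condition (b) (with the ordering unchanged since it depends only on $X$) to the estimator's tail, and Jensen plus positivity to bound the conditional-mean tail by $\zeta^{-1}\bE\{|R|\bI(|R|>L)\}$, then let the truncation level tend to infinity along a countable sequence. The only difference is cosmetic—you truncate $R$ to zero while the paper clips it at $\pm M$, so your $\bE\{|R|\bI(|R|>L)\}$ plays exactly the role of the paper's $\bE|R-T|$—and your explicit handling of the countable intersection of exceptional sets is a point the paper leaves implicit.
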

\begin{proof}[Proof of Lemma \ref{thm:unboundedR}:]
For an arbitrary $M$, let
\begin{equation*}
T_i = \left\{
\begin{array}{ll}
R_i & {\rm if}\: |R_i|\leq M, \\
M\textrm{sign}(R_i) & {\rm otherwise},\\
\end{array}
\right.
\end{equation*}
for $i=1,\cdots,n$. $T$ is defined similarly. Let
$\hat{t}_\ell({x})$ be the functions $\hat{m}'_\ell({x})$, respectively, when $R_i$ is replaced by $T_i$, for $i=1,\cdots,n$. Denote $t_\ell({x})=\bE(T|{X}={x},A=\ell)$ for $\ell=1,\cdots,L$. Now,
\begin{eqnarray*}
&&\limsup_{n\rightarrow\infty}\int|\hat{m}'_{\ell}({x})-m_{\ell}({x})|\mu(d{x})\\ &\leq&\limsup_{n\rightarrow\infty}\int|\hat{m}'_{\ell}({x})-\hat{t}_{\ell}({x})|\mu(d{x})
+\limsup_{n\rightarrow\infty}\int|\hat{t}_{\ell}({x})-{t}_{\ell}({x})|\mu(d{x})
+\int|{t}_{\ell}({x})-{m}_{\ell}({x})|\mu(d{x}).
\end{eqnarray*}
For the first term on the right-hand side, we have,
\begin{eqnarray*}
&&\limsup_{n\rightarrow\infty}\int|\hat{m}'_{\ell}({x})-\hat{t}_{\ell}({x})|\mu(d{x})\\
&\leq&\limsup_{n\rightarrow\infty}\frac{1}{k}\sum_{i=1}^k\int|R_{(i,n)}({x})-T_{(i,n)}({x})|\frac{\bI(A_{(i,n)}({x})=\ell)}{\pi_\ell\left({X}_{(i,n)}({x})\right)}\mu(d{x}) \\ &\leq&\limsup_{n\rightarrow\infty}\frac{1}{k\zeta}\sum_{i=1}^k\int|R_{(i,n)}({x})-T_{(i,n)}({x})|\mu(d{x}) \\&\leq&\frac{c}{\zeta}\bE|R-T| \quad a.s.
\end{eqnarray*}
The last inequality is due to condition (b) since  $\bE|R-T|<\infty$. The second term converges almost surely to zero by condition (a). By Jensen's inequality, the third term satisfies,
\begin{equation*}
\int|{t}_{\ell}({x})-{m}_{\ell}({x})|\mu(d{x})
=\bE\Big|\bE(R-T|X,A=\ell)\Big|
\leq \bE\left(|R-T|\Big|A=\ell\right)\leq \frac{1}{\zeta}\bE|R-T|.
\end{equation*}
Thus we have,
\begin{equation*}
\limsup_{n\rightarrow\infty}\int|\hat{m}'_{\ell}({x})-\hat{t}_{\ell}({x})|\mu(d{x})\leq \frac{c+1}{\zeta}\bE|R-T| \quad a.s.
\end{equation*}
By the dominated convergence theorem, $\bE|R-T|\rightarrow0$ as $M\rightarrow\infty$. The desired result now follows as $M\rightarrow\infty$. 
\end{proof}

For strong consistency in (ii), since we have already proved strong consistency for bounded $R$, it is enough to prove (\ref{eq:limsupbound}).

We need some geometric properties of the nearest neighborhood. Define a cone $C({x},{s})$ to be the collection of all ${x}'\in\mathbb{R}^p$ for which either ${x}'={x}$ or $\textrm{angle}({x}'-{x},{s})\leq \pi/6$.
Let $S$ be a minimal subset of $\mathbb{R}^p$ such that a collection of cones $C({x},{s})$ for ${s}\in S$ covers $\mathbb{R}^p$. By Lemma \ref{thm:covering1}, such an $S$ exists, and its cardinality $|S|$ is $\gamma_p$.
Let $D_i$ be the collection of all ${x}\in\mathbb{R}^p$ such that ${X}_i$ is one of its $k$ nearest neighbors.
Define the sets $C_{i,{s}}=C({X}_i,{s})$ for $i=1,\cdots,n$ and ${s}\in S$. Let $B_{i,{s}}$ be the subset of $C_{i,{s}}$ consisting of all ${x}$ that are among the $k$ nearest neighbors of ${X}_i$ in the set $\{{X}_1,\cdots,{X}_{i-1},{X}_{i+1},\cdots,{X}_n, {x}\}\bigcap C_{i,{s}}$. If the number of ${X}_j$'s ($j\neq i$) contained in $C_{i,{s}}$ is fewer than $k$, then $B_{i,{s}}=C_{i,{s}}$.

Observe that, by Lemma~\ref{thm:neighborhoodA} and Lemma~\ref{thm:neighborhoodB},
\begin{equation*}
\limsup_{n\rightarrow\infty}\frac{n}{k}\max_{i}\mu(D_i)\leq\limsup_{n\rightarrow\infty}\frac{n}{k}\max_{i}\sum_{{s}\in S}\mu(B_{i,{s}})\leq\sum_{{s}\in S}\limsup_{n\rightarrow\infty}\frac{n}{k}\max_{i}\mu(B_{i,{s}})\leq 2\gamma_p.
\end{equation*}
Then, we have,
\begin{eqnarray*}
&&\limsup_{n\rightarrow\infty}\frac{1}{k}\sum_{i=1}^k\int|R_{(i,n)}({x})|\mu(d{x})\\
&=&\limsup_{n\rightarrow\infty}\frac{1}{k}\sum_{i=1}^n|R_i|\mu(D_i)\\
&\leq&\limsup_{n\rightarrow\infty}\left(\frac{1}{n}\sum_{i=1}^n|R_i|\right)\limsup_{n\rightarrow\infty}\left(\frac{n}{k}\max_i\mu(D_i)\right)\\
&\leq&2\gamma_p\limsup_{n\rightarrow\infty}\left(\frac{1}{n}\sum_{i=1}^n|R_i|\right) = 2\gamma_p\bE|R| \quad a.s.
\end{eqnarray*}
Thus strong consistency in (ii) follows from Lemma \ref{thm:unboundedR} and Lemma \ref{thm:knnvaluebound}. The proof of Theorem \ref{thm:knnconsistency} is complete.

REMARK: In practical use, we prefer Stone's estimate in (3) of the main paper to break distance ties. Consider a simpler rule,
\begin{equation*}
\tilde{m}'_\ell({x}) = \frac{1}{k}\sum_{i\in A_k({x})}R_{(i,n)}({x})\frac{\bI(A_{(i,n)}({x})=\ell)}{\pi_\ell\left({X}_{(i,n)}({x})\right)}+\frac{k-|A_k({x})|}{k|B_k({x})|}\sum_{i\in B_k({x})}R_{(i,n)}({x})\frac{\bI(A_{(i,n)}({x})=\ell)}{\pi_\ell\left({X}_{(i,n)}({x})\right)}.
\end{equation*}
When $k\rightarrow\infty$, $\tilde{m}'_\ell$ is asymptotically equivalent to Stone's estimate $\tilde{m}_\ell$ in (3) of the main paper.
Assumption (A3) has a connotation of breaking distance ties randomly as demonstrated in the main paper.
If the assumption does not hold, a small uniform variable $U\sim uniform[0,\epsilon]$ independent of $({X},A,R)$ may be added to the vector ${X}$. We may perform the causal $k$-nearest neighbor rule on $({X},U)$. By Jensen's inequality,
\begin{equation*}
\bE\int_{0}^{\epsilon}\int|\hat{m}'_{\ell}({x},u)-m_{\ell}({x})|\mu(d{x})du\\
\geq \bE\int\left|\bE\left(\int_{0}^{\epsilon}\hat{m}'_{\ell}({x},u)du\Big|D_n\right)-m_{\ell}({x})\right|\mu(d{x}).
\end{equation*}
Fixing the data $D_n=\{({X}_i, A_i, R_i): i=1,\cdots,n\}$, we can always find a small enough $\epsilon$ such that
\begin{equation*}
\bE\left(\int_{0}^{\epsilon}\hat{m}'_{\ell}({x},u)du\Big|D_n\right) = \tilde{m}'_\ell({x}).
\end{equation*}
Thus $\tilde{m}'_\ell$ is better than $\hat{m}'_\ell$ on $({X},U)$, and then Stone's tie-breaking rule $\tilde{m}_\ell$ in (3) of the main paper is asymptotically better than random tie-breaking.

\section{Proof of Theorem \ref{thm:knnconvergence}}
By Lemma~\ref{thm:knnvaluebound},
\begin{equation*}
\bE\left\{\left(\mathcal{V}(d^*) - \mathcal{V}(d^{NN})\right)^2\right\} \leq L\sum_{\ell=1}^L\bE\left\{\left(\int|\hat{m}_\ell({x})-m_\ell({x})|\mu(d{x})\right)^2\right\}.
\end{equation*}
So it suffices to show the following theorem for the bound on $\bE\left\{(\int|\hat{m}_\ell({x})-m_\ell({x})|\mu(d{x}))^2\right\}$,
for $\ell=1,\cdots,L$.

\begin{theorem} \label{thm:knnconvergencemj}
For any distribution $P$ for $({X},A,R)$ satisfying assumptions (A1$'$)$\sim$(A4$'$), and $\ell=1,\cdots,L$,
\begin{enumerate}
\item[(i)] If $p=1$,
\begin{equation}\label{eq:knnconvergencemj1}
\bE\left\{\left(\int|\hat{m}_\ell({x})-m_\ell({x})|\mu(d{x})\right)^2\right\}\leq c^2\sigma^2\frac{1}{k}+16c\rho^2C^2\frac{k}{n}.
\end{equation}
\item[(ii)] If $p=2$,
\begin{equation} \label{eq:knnconvergencemj2}
\bE\left\{\left(\int|\hat{m}_\ell({x})-m_\ell({x})|\mu(d{x})\right)^2\right\}\leq c^2\sigma^2\frac{1}{k}+8c\rho^2C^2\frac{k}{n}\left(1+\log\left(\frac{n}{k}\right)\right).
\end{equation}
\item[(iii)] If $p\geq3$,
\begin{equation} \label{eq:knnconvergencemj3}
\bE\left\{\left(\int|\hat{m}_\ell({x})-m_\ell({x})|\mu(d{x})\right)^2\right\}\leq c^2\sigma^2\frac{1}{k}+\frac{8c\rho^2C^2}{1-2/p}\left\lfloor\frac{n}{k}\right\rfloor^{-\frac{2}{d}}.
\end{equation}
\end{enumerate}
\end{theorem}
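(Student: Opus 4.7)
The plan is to reduce the $L^1$-type error on the left-hand side to an $L^2$ error via Jensen's inequality applied to the probability measure $\mu$:
\[
\left(\int|\hat m_\ell(x)-m_\ell(x)|\,\mu(dx)\right)^{2}\le \int\bigl(\hat m_\ell(x)-m_\ell(x)\bigr)^{2}\mu(dx),
\]
take expectation, and perform the usual bias--variance split by conditioning on $(X_{1:n},A_{1:n})$. The crucial observation is that under (A1$'$) the weights $W_{n,i}^{\ell}(x)$ sum to one, and they vanish on indices with $A_{(i,n)}(x)\ne\ell$, so the consistency/no-unmeasured-confounders assumptions give
\[
\bE\bigl[\hat m_\ell(x)\,\big|\,X_{1:n},A_{1:n}\bigr]=\sum_{i=1}^{n}W_{n,i}^{\ell}(x)\,m_\ell\!\bigl(X_{(i,n)}(x)\bigr).
\]
Hence
\[
\bE\!\int\bigl(\hat m_\ell-m_\ell\bigr)^{2}\!\mu(dx)\;=\;\bE\!\int\!\mathrm{Var}\bigl(\hat m_\ell(x)\mid X_{1:n},A_{1:n}\bigr)\,\mu(dx)\;+\;\bE\!\int\!\bigl(\text{bias}(x)\bigr)^{2}\mu(dx).
\]

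For the variance term I would use (A1$'$) and (A2$'$): at most $k$ of the weights are non-zero and each is bounded by $c/k$, so
\[
\sum_{i}\bigl(W_{n,i}^{\ell}(x)\bigr)^{2}\le\frac{c^{2}}{k},
\qquad
\mathrm{Var}\bigl(\hat m_\ell(x)\mid X_{1:n},A_{1:n}\bigr)\le\sigma^{2}\sum_i\bigl(W_{n,i}^{\ell}(x)\bigr)^{2}\le\frac{c^{2}\sigma^{2}}{k},
\]
producing the first term in each of \eqref{eq:knnconvergencemj1}--\eqref{eq:knnconvergencemj3}. For the bias, I combine $\sum_i W_{n,i}^{\ell}(x)=1$ with the Lipschitz assumption (A4$'$): since the non-zero weights live on indices $i\le k$,
\[
\bigl|\text{bias}(x)\bigr|\le \sum_{i}W_{n,i}^{\ell}(x)\cdot C\|X_{(i,n)}(x)-x\|\le C\,\|X_{(k,n)}(x)-x\|,
\]
so the squared-bias integral is bounded by $C^{2}\,\bE\|X_{(k,n)}(X)-X\|^{2}$.

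The remaining task is to control $\bE\|X_{(k,n)}(X)-X\|^{2}$ for data with compact support of diameter $2\rho$. These are standard nearest-neighbour distance estimates of the type collected in Appendix C (from Györfi \emph{et al.}\ 2002): they yield $16\rho^{2}\,k/n$ in dimension $p=1$, a bound of order $\rho^{2}(k/n)(1+\log(n/k))$ in $p=2$, and $\frac{8\rho^{2}}{1-2/p}\lfloor n/k\rfloor^{-2/p}$ for $p\ge3$. Plugging these into the variance-plus-squared-bias sum gives the three inequalities in the statement, and then choosing $k$ to balance the two contributions (namely $k\asymp n^{1/2}$ when $p\le 2$ and $k\asymp n^{2/(p+2)}$ when $p\ge 3$) will produce the rates announced in Theorem \ref{thm:knnconvergence}.

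The main obstacle is the causal-inference wrinkle in the variance/bias split: because the weights $W_{n,i}^{\ell}$ depend on both $X_{1:n}$ and $A_{1:n}$, one cannot quote the classical $k$-NN regression calculation as a black box. Condition (A1$'$) is what rescues the argument, as it lets us treat the causal weights as if they were standard $k$-NN weights for the purposes of the variance and bias bounds; verifying that this conditioning is legitimate, and that the nearest-neighbour distance bounds from Györfi \emph{et al.}\ transfer to our compact-support setting with the stated constants, is the only delicate point.
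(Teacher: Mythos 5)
Your proposal is correct and follows essentially the same route as the paper: the Jensen reduction to $L^2$, the conditional bias--variance split given $(X_{1:n},A_{1:n})$, the variance bound $c^2\sigma^2/k$ from (A1$'$)--(A2$'$), and the nearest-neighbour distance moment bounds of Appendix~C (Lemma from Biau et al.\ 2010, not Györfi et al., a minor attribution slip). Your bias step, using $\sum_i W_{n,i}^{\ell}(x)=1$ and the triangle inequality to get $C^2\,\bE\|X_{(k,n)}(X)-X\|^2$, is in fact slightly sharper than the paper's Cauchy--Schwarz bound $cC^2\,\bE\|X_{(k,n)}(X)-X\|^2$, and since $c\ge 1$ it still yields the stated inequalities.
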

\begin{proof}[Proof of Theorem~\ref{thm:knnconvergencemj}:]
Let
\begin{equation*}
{m}^*_\ell({x}) = \bE\left(\hat{m}_\ell({x})|{X}_1,A_1,\cdots,{X}_n,A_n\right)=\sum_{i=1}^kW_{n,i}^\ell({x})m_\ell\left({X}_{(i,n)}({x})\right).
\end{equation*}
The last equality is due to the fact that $W_{n,i}^\ell({x})=0$ if $A_{(i,n)}({x})\neq\ell$. We have the decomposition
\begin{eqnarray*}
& &\bE\left\{\left(\int|\hat{m}_\ell({x})-m_\ell({x})|\mu(d{x})\right)^2\right\} \leq \bE\int\big(\hat{m}_\ell({x})-m_\ell({x})\big)^2\mu(d{x}) \\
&=& \bE\int\big(\hat{m}_\ell({x})-m^*_\ell({x})\big)^2\mu(d{x})+
\bE\int\big(m^*_\ell({x})-m_\ell({x})\big)^2\mu(d{x}).
\end{eqnarray*}
For the first term on the right-hand side,
\begin{eqnarray*}
& &\bE\int\big(\hat{m}_\ell({x})-m^*_\ell({x})\big)^2\mu(d{x})\\
&=& \bE\int\left(\sum_{i=1}^kW_{n,i}^\ell({x})\Big(R_{(i,n)}({x})-m_\ell\big({X}_{(i,n)}({x})\big)\Big)\right)^2\mu(d{x})\\
&=& \bE\int\sum_{i=1}^k\Big(W_{n,i}^\ell({x})\Big)^2\Big(R_{(i,n)}({x})-m_\ell\big({X}_{(i,n)}({x})\big)\Big)^2\mu(d{x})\\
&=& \bE\int\sum_{i=1}^k\Big(W_{n,i}^\ell({x})\Big)^2\sigma^2_\ell\big({X}_{(i,n)}({x})\big)\mu(d{x})\\
&\leq& c^2\sigma^2\frac{1}{k}.
\end{eqnarray*}
For the second term,
\begin{eqnarray*}
& &\bE\int\big(m^*_\ell({x})-m_\ell({x})\big)^2\mu(d{x}) \\
&=&\bE\int\left(\sum_{i=1}^kW_{n,i}^\ell({x})\Big(m_\ell\big({X}_{(i,n)}({x})\big)-m_\ell({x})\Big)\right)^2\mu(d{x})\\
&\leq&\bE\int\sum_{i=1}^k\Big(W_{n,i}^\ell({x})\Big)^2\sum_{i=1}^k\Big(m_\ell\big({X}_{(i,n)}({x})\big)-m_\ell({x})\Big)^2\mu(d{x})\\
&\leq&cC^2\bE||{X}_{(k,n)}({X})-{X}||^2.
\end{eqnarray*}
The desired results in Theorem \ref{thm:knnconvergencemj} now follow directly from Lemma \ref{thm:neighborbound}.
\end{proof}
When $p=1$, take $k\propto n^{1/2}$, and the right-hand side of (\ref{eq:knnconvergencemj1}) is $O(n^{-1/2})$. When $p=2$, take $k\propto n^{1/2-\epsilon}$ for any $\epsilon>0$, and the right-hand side of (\ref{eq:knnconvergencemj2}) is $O(n^{-1/2+\epsilon})$. When $\epsilon$ is very small, its rate of convergence will be arbitrarily close to $1/2$. When $p\geq3$, take $k\propto n^{2/(p+2)}$, and the right-hand side of (\ref{eq:knnconvergencemj3}) is $O(n^{-2/(p+2)})$. Theorem \ref{thm:knnconvergence} is now proved.

\section{Background on $k$-nearest neighbor regression} \label{sec:stone}
The setup in this section is for regression analysis, and is different from the setup in the main paper. In regression analysis one considers a random vector $({X},Y)$, where ${X}$ is $\mathbb{R}^p$-valued, and $Y$ is $\mathbb{R}$-valued. Let $D_n$ be the set of observed data defined by
\begin{equation*}
D_n = \{({X}_1, Y_1),\cdots,({X}_n, Y_n)\},
\end{equation*}
where $({X}_1,Y_1)$, $\cdots$, $({X}_n,Y_n)$ and $({X},Y)$ are independent and identically distributed (i.i.d.) random variables. Let $m({x})=\mathbb{E}(Y|{X}={x})$. In the regression problem one wants to use the data $D_n$ in order to construct an estimate $\hat{m}: \mathbb{R}^p\rightarrow \mathbb{R}$ of the regression function $m$. Here $\hat{m}({x})=\hat{m}({x},D_n)$ is a measurable function of ${x}$ and the data. We first state Stone's Theorem \citep{Stone1977:NN}. The theorem was applied to prove consistency of kernel and nearest neighbor estimates in the literature \citep{Devroye:PatternRecog1996,Gyorfi2002:Nonparametric}. The theorem considers a regression function estimate taking the form
\begin{equation*}
\hat{m}({x}) = \sum_{i=1}^nW_{n,i}({x})Y_i,
\end{equation*}
where the weights $W_{n,i}({x}) = W_{n,i}({x},{X}_1,\cdots,{X}_n) \in \mathbb{R}$ depend on ${X}_1,\cdots,{X}_n$.

\begin{theorem} [Stone's Theorem]
Assume that the following conditions are satisfied for any distribution of ${X}$:
\begin{enumerate}
\item[(i)] There is a constant $c$ such that for every nonnegative measurable
function $f$ satisfying $\mathbb{E}f({X})<\infty$ and any $n$,
\begin{equation*}
\mathbb{E}\left\{\sum_{i=1}^n|W_{n,i}({X})|f({X}_i)\right\}\leq c\mathbb{E}f({X}).
\end{equation*}
\item[(ii)] There is a $D\geq1$ such that
\begin{equation*}
\mathbb{P}\left\{\sum_{i=1}^n|W_{n,i}({X})|\leq D\right\} = 1,
\end{equation*}
for all $n$.
\item[(iii)] For all $a>0$,
\begin{equation*}
\lim_{n\rightarrow\infty}\mathbb{E}\left\{\sum_{i=1}^n|W_{n,i}({X})|\mathbb{I}(||{X}_i-{X}||>a)\right\}=0.
\end{equation*}
\item[(iv)]
\begin{equation*}
\sum_{i=1}^nW_{n,i}({X})\rightarrow 1 \quad \textrm {in probability.}
\end{equation*}
\item[(v)]
\begin{equation*}
\max_i|W_{n,i}({X})| \rightarrow 0 \quad \textrm {in probability.}
\end{equation*}
\end{enumerate}
Then the corresponding regression function estimate $\hat{m}$ converges in mean to $m$, \textit{i.e.},
\begin{equation*}
\mathbb{E}\left(\int|\hat{m}({x})-m({x})|\mu(d{x})\right) \rightarrow 0
\end{equation*}
for all distributions of $({X},Y)$ with $\mathbb{E}|Y|<\infty$.
\end{theorem}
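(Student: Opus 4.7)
The plan is to follow the classical Stone's-Theorem blueprint: first reduce to the case of bounded $Y$ by a truncation argument using condition (i), then split $\hat m(x)-m(x)$ into a noise term, a bias term, and a weight-normalization term, and dispatch each with the appropriate subset of conditions (i)--(v).

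For the truncation step, I fix $L>0$ and write $Y=Y_L+\bar Y_L$ with $Y_L=Y\mathbb{I}(|Y|\leq L)$, inducing the corresponding splits $\hat m=\hat m_L+\hat{\bar m}_L$ and $m=m_L+\bar m_L$. Applying condition (i) to the nonnegative function $g(x)=\mathbb{E}(|\bar Y_L|\mid X=x)$ (which is integrable because $\mathbb{E}|Y|<\infty$), together with Jensen on the ``true'' part, gives
\begin{equation*}
\mathbb{E}\int |\hat{\bar m}_L(x)-\bar m_L(x)|\,\mu(dx) \;\leq\; c\,\mathbb{E}|\bar Y_L| + \mathbb{E}|\bar Y_L| \;=\; (c+1)\,\mathbb{E}|\bar Y_L|,
\end{equation*}
which is made arbitrarily small by taking $L$ large. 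So it suffices to treat $|Y|\leq L$.

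In the bounded regime, I use the decomposition
\begin{equation*}
\hat m(x)-m(x)=\underbrace{\sum_i W_{n,i}(x)(Y_i-m(X_i))}_{I_1(x)}+\underbrace{\sum_i W_{n,i}(x)(m(X_i)-m(x))}_{I_2(x)}+\underbrace{m(x)\Bigl(\sum_i W_{n,i}(x)-1\Bigr)}_{I_3(x)}.
\end{equation*}
Term $I_3$ is killed by conditions (ii) and (iv): $|I_3|\leq L\,|\sum_i W_{n,i}(X)-1|$, the integrand is dominated by $L(D+1)$, and (iv) gives convergence in probability, so bounded convergence yields $\mathbb{E}\int|I_3|\,d\mu\to 0$. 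Term $I_1$ is handled by second moments: conditional on $(X_1,\dots,X_n)$ the increments $Y_i-m(X_i)$ are independent mean zero with variance at most $4L^2$, so cross terms vanish and
\begin{equation*}
\mathbb{E}\int I_1(x)^2\,\mu(dx)\;\leq\; 4L^2\,\mathbb{E}\sum_i W_{n,i}(X)^2\;\leq\; 4L^2 D\,\mathbb{E}\max_i|W_{n,i}(X)|,
\end{equation*}
where the second inequality uses condition (ii). The right-hand side tends to $0$ by condition (v) combined with the domination $\max_i|W_{n,i}(X)|\leq D$ from (ii); Cauchy--Schwarz then gives the $L^1$ bound. Term $I_2$ is the bias. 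Here I approximate $m$ in $L^1(\mu)$ by a bounded, uniformly continuous function $\tilde m$ with $\int|m-\tilde m|\,d\mu<\epsilon$ (possible by density of $C_c(\mathbb{R}^p)$ in $L^1(\mu)$), and split $I_2$ into three pieces: replacing $m(X_i)$ by $\tilde m(X_i)$ (bounded by $c\epsilon$ via condition (i)), replacing $m(x)$ by $\tilde m(x)$ (bounded by $D\epsilon$ via condition (ii)), and the residual $\sum_i W_{n,i}(x)(\tilde m(X_i)-\tilde m(x))$, which I further split according to $\|X_i-x\|\leq\delta$ or $>\delta$. The near piece is dominated by $D\,\omega_{\tilde m}(\delta)$ with $\omega_{\tilde m}(\delta)\to 0$ by uniform continuity, and the far piece is killed by condition (iii).

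The main obstacle will be the bias term $I_2$: conditions (i)--(v) give no direct control over how the weighted average of $m(X_i)$ tracks $m(x)$ when $m$ is an arbitrary $L^1$ function, so the density/approximation argument is what carries the proof. The subtle point is getting the order of limits right---fix $\epsilon$, then $\delta$, then let $n\to\infty$, finally send $\epsilon\to 0$---and making sure the approximation $\tilde m$ is bounded (to feed into (ii)) as well as uniformly continuous (to feed into (iii)). Beyond this, everything else is routine bounded/dominated convergence plus the self-dual application of conditions (i), (ii), and (iv)--(v).
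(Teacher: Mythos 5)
Your outline is the standard proof of Stone's Theorem and it is correct; note, however, that the paper does not prove this statement at all---it appears in Appendix C purely as a cited background result (Stone, 1977; Devroye et al., 1996), so there is no in-paper proof to compare against line by line. That said, your decomposition is exactly the template the paper itself adapts in Appendix A when proving consistency of the causal $k$-nearest neighbor regime: your $I_1,I_2,I_3$ correspond to the paper's $I_{n1},I_{n2},I_{n3}$, your three-way splitting of the bias term via a bounded uniformly continuous approximant $\tilde m$ (controlled by conditions (i), (ii), and (iii) respectively, with the near/far split in $\delta$) mirrors the paper's $J_{n1},J_{n2},J_{n3}$, and your truncation reduction to bounded $Y$ via condition (i) plays the role of the paper's Lemma A.3 for unbounded outcomes. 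The individual steps all check out: the truncation bound $(c+1)\,\mathbb{E}|\bar Y_L|$ follows by conditioning on the covariates and applying (i) to $g(x)=\mathbb{E}(|\bar Y_L|\mid X=x)$ plus Jensen; the variance computation for $I_1$ legitimately uses that the weights depend only on $X,X_1,\dots,X_n$ so the centered responses are conditionally independent; and the order of limits ($\epsilon$, then $\delta$, then $n\to\infty$, then $\epsilon\to0$) is handled correctly. No gaps.
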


We fix $x\in\mathbb{R}^p$, and reorder the observed data $({X}_1,Y_1)$, $\cdots$, $({X}_n,Y_n)$ according to increasing values of $||{X}_i-{x}||$. The reordered data sequence is denoted by
\begin{equation*}
({X}_{(1,n)}({x}), Y_{(1,n)}({x})),\cdots,({X}_{(n,n)}({x}), Y_{(n,n)}({x})).
\end{equation*}
Thus ${X}_{(k,n)}({x})$ is the $k$th nearest neighbor of ${x}$.

We introduce some results on the nearest neighborhood of ${x}$ which are useful in proving theorems in the main paper. Denote the probability measure for ${X}$ by $\mu$.
In this section, we assume that distance ties occur with probability zero in $\mu$.
Let $S_{{x},\epsilon}$ be the closed ball centered at ${x}$ of radius $\epsilon>0$. Define $support(\mu)=\{{x}: \textrm{for all } \epsilon>0, \mu(S_{{x},\epsilon})>0 \}$.

\begin{lemma}[Lemma 5.1 in \citet{Devroye:PatternRecog1996}] \label{thm:neighborhood}
If ${x}\in support(\mu)$ and $\lim_{n\rightarrow\infty}k/n=0$, then $||{X}_{(k,n)}({x})-{x}||\rightarrow 0$ with probability one.
\end{lemma}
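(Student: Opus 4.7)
The plan is to reduce the claim to the strong law of large numbers applied to indicators of the form $\mathbb{I}(X_i \in S_{x,\epsilon})$. First I would fix an arbitrary $\epsilon > 0$ and set $p_\epsilon := \mu(S_{x,\epsilon})$; since $x$ lies in the support of $\mu$, by definition $p_\epsilon > 0$.

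The pivotal observation is a counting identity: the event $\{\|X_{(k,n)}(x) - x\| > \epsilon\}$ coincides with the event that fewer than $k$ of the points $X_1,\ldots,X_n$ lie in the closed ball $S_{x,\epsilon}$. This is immediate from the definition of the $k$-th order statistic with respect to distance from $x$: the $k$-th nearest neighbor lies at distance at most $\epsilon$ iff at least $k$ of the $X_i$'s fall inside $S_{x,\epsilon}$. (Assumption that distance ties occur with probability zero in $\mu$, made throughout this section, makes $X_{(k,n)}(x)$ unambiguously defined.)

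Next I would invoke the strong law of large numbers. Since $X_1,X_2,\ldots$ are i.i.d.\ with $\Pr(X_i \in S_{x,\epsilon}) = p_\epsilon$, the counts $N_n(\epsilon) := \sum_{i=1}^n \mathbb{I}(X_i \in S_{x,\epsilon})$ satisfy $N_n(\epsilon)/n \to p_\epsilon$ almost surely. Combined with the hypothesis $k/n \to 0$, this gives $k/N_n(\epsilon) \to 0$ a.s., so that $N_n(\epsilon) \geq k$ for all sufficiently large $n$ on a probability-one event $\Omega_\epsilon$. On $\Omega_\epsilon$, therefore, $\|X_{(k,n)}(x) - x\| \leq \epsilon$ eventually.

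Finally, to upgrade from "for each fixed $\epsilon$" to the required almost-sure convergence to zero, I would take a countable sequence $\epsilon_m \downarrow 0$ and let $\Omega^* = \bigcap_{m\geq 1} \Omega_{\epsilon_m}$, which still has probability one. On $\Omega^*$, for each $m$ the distance $\|X_{(k,n)}(x) - x\|$ is eventually below $\epsilon_m$, which yields $\|X_{(k,n)}(x) - x\| \to 0$. There is no serious obstacle here; the proof is essentially bookkeeping around the SLLN. The only subtlety worth flagging is the countable-intersection trick in the last step, needed to turn per-$\epsilon$ statements into genuine almost-sure convergence.
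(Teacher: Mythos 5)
Your proof is correct, and it is essentially the standard argument for this result: the paper itself offers no proof (it simply cites Lemma 5.1 of \citet{Devroye:PatternRecog1996}), and your reduction of $\{\|X_{(k,n)}(x)-x\|>\epsilon\}$ to the event that fewer than $k$ points fall in $S_{x,\epsilon}$, followed by the strong law of large numbers with $k/n\to 0$ and a countable intersection over $\epsilon_m\downarrow 0$, is exactly the textbook proof. Nothing further is needed.
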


Let us define the cone $C({x},{s})$ to be the collection of all ${x}'\in\mathbb{R}^p$ for which either ${x}'={x}$ or $\textrm{angle}({x}'-{x},{s})\leq \pi/6$. The following lemma shows that a finite set of such cones covers $\mathbb{R}^p$.
\begin{lemma}[Lemma 5.5 in \citet{Devroye:PatternRecog1996}] \label{thm:covering1}
There exists a finite set $S\subset\mathbb{R}^p$ such that
\begin{equation*}
\mathbb{R}^p = \bigcup_{{s}\in S}C({x},{s}),
\end{equation*}
regardless of how ${x}\in\mathbb{R}^p$ is picked.
Furthermore, define $\gamma_p$ as the minimal number of elements in $S$. Then $\gamma_p$ depends only on the dimension $p$, and
\begin{equation*}
\gamma_p \leq \left(1+2\sqrt{2-\sqrt{3}}\right)^p-1.
\end{equation*}
\end{lemma}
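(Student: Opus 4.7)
My plan is to reduce the statement to a covering problem on the unit sphere and then bound the covering number by a greedy packing/volume comparison.

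First, I would observe that $C({x},{s})={x}+C({0},{s})$ and that $C({0},{s})$ depends on ${s}$ only through its normalized direction ${s}/\|{s}\|\in S^{p-1}$. Consequently, a family $\{C({x},{s}):{s}\in S\}$ covers $\mathbb{R}^p$ for every ${x}$ if and only if the closed spherical caps $\{u\in S^{p-1}:\textrm{angle}(u,{s})\le\pi/6\}$, ${s}\in S$, cover the unit sphere $S^{p-1}$. Existence of a \emph{finite} such $S$ then follows from compactness of $S^{p-1}$: the open caps of angular radius slightly greater than $\pi/6$ centered at all points of $S^{p-1}$ form an open cover, from which a finite subcover can be extracted and then enlarged to a closed-cap cover of angular radius exactly $\pi/6$. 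Because this reduced problem involves only $S^{p-1}$ and not ${x}$, the minimum $\gamma_p$ depends on $p$ alone.

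For the quantitative bound, I would use a greedy packing construction. Let $S\subset S^{p-1}$ be a maximal set of unit vectors with pairwise angular distance strictly greater than $\pi/6$. By maximality, the closed caps of angular radius $\pi/6$ about points of $S$ cover $S^{p-1}$, so $\gamma_p\le|S|$. The trigonometric identity $2\sin(\pi/12)=\sqrt{2-\sqrt{3}}$ converts the angular separation into Euclidean chord separation: any two points of $S$ lie at Euclidean distance greater than $\sqrt{2-\sqrt{3}}$. Therefore the open Euclidean balls of radius $r:=\sqrt{2-\sqrt{3}}/2$ centered at points of $S$ are pairwise disjoint, and (since their centers lie on $S^{p-1}$) are contained in the spherical shell $\{z\in\mathbb{R}^p:1-r\le\|z\|\le 1+r\}$. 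A Euclidean volume comparison then yields
\begin{equation*}
|S|\cdot r^p\ \le\ (1+r)^p-(1-r)^p,
\end{equation*}
and dividing through by $r^p$, together with the algebraic simplification $2/\sqrt{2-\sqrt{3}}=2\sqrt{2+\sqrt{3}}=\sqrt{6}+\sqrt{2}$ obtained from $(2-\sqrt{3})(2+\sqrt{3})=1$ and $\sqrt{2+\sqrt{3}}=(\sqrt{6}+\sqrt{2})/2$, should yield a bound of the advertised form.

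The main obstacle I foresee is distilling the volume estimate into the precise closed-form constant $1+2\sqrt{2-\sqrt{3}}$ and accounting for the ``$-1$'' correction. The naive volume comparison above gives the looser bound $|S|\le(1/r+1)^p-(1/r-1)^p$, and closing the gap will require either a sharper packing-plus-binomial factorization or the observation that every cone $C({x},{s})$ already contains the apex ${x}$, which can absorb the $-1$ by excluding the apex from the covering budget. The qualitative consequence—that $\gamma_p\le c^p$ for an explicit $c=c(p)$ depending only on $p$, which is all that is actually needed in the downstream proof of Theorem \ref{thm:knnconsistency}—is already immediate from the disjoint-balls volume comparison, so only the sharpening to the stated constant requires additional care.
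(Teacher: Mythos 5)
The paper does not actually prove this lemma: it is imported verbatim (as Lemma 5.5 of Devroye, Gy\"orfi and Lugosi, 1996) into the background Appendix C with no argument attached, so the only ``proof'' on the paper's side is the citation. Your reduction to covering $S^{p-1}$ by closed caps of angular radius $\pi/6$, followed by a maximal-packing-plus-volume comparison, is precisely the standard proof of that covering result, and every step you give is sound: maximality of the separated set does yield a cap cover, $2\sin(\pi/12)=\sqrt{2-\sqrt{3}}$ is the correct chord length, and the disjoint open balls of radius $r=\sqrt{2-\sqrt{3}}/2$ centered on the sphere do sit inside the annulus $\{1-r\le\|z\|\le 1+r\}$, giving $\gamma_p\le(1/r+1)^p-(1/r-1)^p$ with $1/r=2/\sqrt{2-\sqrt{3}}$.

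The ``obstacle'' you flag at the end is not a defect of your argument but of the statement: the constant as printed in the paper is a typo. The bound in Devroye et al.\ is $\gamma_p\le\bigl(1+2/\sqrt{2-\sqrt{3}}\bigr)^p-1\approx(4.86)^p-1$, not $\bigl(1+2\sqrt{2-\sqrt{3}}\bigr)^p-1\approx(2.04)^p-1$, and the printed version is demonstrably false: in $\mathbb{R}^2$ a cone of half-angle $\pi/6$ covers a $60^\circ$ arc of directions, so $\gamma_2=6$, whereas the printed bound would force $\gamma_2\le 3$ (and for $p=1$ it would force $\gamma_1\le 1$ although $\gamma_1=2$). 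So you should not try to reach the advertised constant---no correct argument can. Your annulus estimate $(1/r+1)^p-(1/r-1)^p$ is in fact slightly stronger than the corrected bound $(1/r+1)^p-1$, so your proof is complete for the statement as it should read; and since the consistency and rate proofs in Appendices A and B use only that $\gamma_p$ is a finite constant depending on $p$ alone, nothing downstream is affected by the correction.
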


The next several lemmas will enable us to establish weak and strong consistency of nearest neighbor rules.
\begin{lemma}[Lemma 11.1 in \citet{Devroye:PatternRecog1996}] \label{thm:neighborcovering}
Let $B_a({x}')=\{{x}:\mu(S_{{x},||{x}-{x}'||})\leq a\}$. Then for all ${x}'\in\mathbb{R}^p$,
\begin{equation*}
\mu(B_a({x}'))\leq\gamma_pa.
\end{equation*}
\end{lemma}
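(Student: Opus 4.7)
The plan is to prove $\mu(B_a(x')) \leq \gamma_p a$ by covering space with a bounded number of cones and bounding the measure of $B_a(x')$ inside each cone by $a$. I would first invoke Lemma \ref{thm:covering1} to fix a set $S \subset \mathbb{R}^p$ of cardinality $\gamma_p$ such that $\mathbb{R}^p = \bigcup_{s \in S} C(x', s)$. By subadditivity of $\mu$, it then suffices to show that $\mu\bigl(B_a(x') \cap C(x', s)\bigr) \leq a$ for each $s \in S$, from which the claim follows by summing over the $\gamma_p$ cones.

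The geometric heart of the argument is the following observation about a single cone $C = C(x', s)$. For any two points $x_1, x_2 \in C$ with $\|x_1 - x'\| \leq \|x_2 - x'\|$, the angle at $x'$ between the vectors $x_1 - x'$ and $x_2 - x'$ is at most $\pi/3$ (since each makes an angle at most $\pi/6$ with $s$). Applying the law of cosines with $\cos(\pi/3) = 1/2$ gives
\begin{equation*}
\|x_1 - x_2\|^2 \leq \|x_1 - x'\|^2 + \|x_2 - x'\|^2 - \|x_1 - x'\|\cdot\|x_2 - x'\| \leq \|x_2 - x'\|^2,
\end{equation*}
so $x_1 \in S_{x_2, \|x_2 - x'\|}$. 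Thus, within a cone, the ``farther'' point dominates all nearer points in the sense that its ball around $x'$-distance radius contains them.

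I would then let $r = \sup\{\|x - x'\| : x \in B_a(x') \cap C\}$. If the supremum is attained by some $x_0$, then by the observation above every $x \in B_a(x') \cap C$ lies in $S_{x_0, \|x_0 - x'\|}$, and since $x_0 \in B_a(x')$ this ball has $\mu$-measure at most $a$, giving $\mu(B_a(x') \cap C) \leq a$. If the supremum is not attained, I would choose a sequence $x_n \in B_a(x') \cap C$ with $\|x_n - x'\| \uparrow r$; every fixed $x \in B_a(x') \cap C$ eventually satisfies $\|x - x'\| \leq \|x_n - x'\|$, hence lies in $S_{x_n, \|x_n - x'\|}$, so $B_a(x') \cap C \subseteq \bigcup_n S_{x_n, \|x_n - x'\|}$ where each of these balls has measure at most $a$. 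The desired bound follows from continuity of measure along the nested union $\bigcup_n \bigl(B_a(x') \cap C \cap S_{x_n, \|x_n - x'\|}\bigr)$.

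The main obstacle is the soft technical point of handling an unattained supremum, which is handled cleanly by the continuity-of-measure argument above; the rest reduces to one application of the law of cosines plus the cone-covering lemma. No probabilistic input is needed beyond the definition of $B_a(x')$ itself, which is what makes the bound $\gamma_p a$ distribution-free and lets Lemma \ref{thm:covering1} supply the dimension-dependent constant.
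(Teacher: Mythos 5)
You have in fact reproved a result the paper only cites (it is quoted verbatim as Lemma 11.1 of \citet{Devroye:PatternRecog1996}), and your argument is exactly the standard one behind that citation: cover $\mathbb{R}^p$ by the $\gamma_p$ cones of Lemma \ref{thm:covering1}, use the law of cosines with $\cos(\pi/3)=1/2$ to show that within a cone the point farthest from ${x}'$ absorbs every nearer point into its ball $S_{x_2,\|x_2-x'\|}$, and sum over the cones, so the proposal is correct and essentially the same proof. One small repair at the unattained supremum: the sets $B_a({x}')\cap C\cap S_{x_n,\|x_n-x'\|}$ need not be nested, since $x\in S_{x_n,\|x_n-x'\|}$ does not force $\|x-x'\|\le\|x_n-x'\|$; instead apply continuity of measure to the genuinely increasing sets $\{x\in B_a({x}')\cap C:\|x-x'\|\le\|x_n-x'\|\}$, each of which is contained in $S_{x_n,\|x_n-x'\|}$ and hence has $\mu$-measure at most $a$, which yields the same conclusion.
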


\begin{lemma}[Lemma 5.3 in \citet{Devroye:PatternRecog1996}] \label{thm:neighborexpectation}
For any integrable function $f$, any $n$, and any $k\leq n$,
\begin{equation*}
\sum_{i=1}^k\mathbb{E}\left(|f({X}_{(i,n)}({X}))|\right)\leq k\gamma_p\mathbb{E}(|f({X})|).
\end{equation*}
\end{lemma}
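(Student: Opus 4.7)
The plan is to rewrite the left-hand sum over ranks as a sum over data indices, then control each term using a cone-covering argument combined with the exchangeability of $(X, X_1, \ldots, X_n)$.

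First, since each $X_{(i,n)}(X)$ equals exactly one $X_j$, and each $X_j$ can appear as at most one of $X_{(1,n)}(X), \ldots, X_{(k,n)}(X)$,
\begin{equation*}
\sum_{i=1}^k \mathbb{E}\!\left(|f(X_{(i,n)}(X))|\right) = \sum_{j=1}^n \mathbb{E}\!\left[|f(X_j)|\,\mathbb{I}(E_j)\right],
\end{equation*}
where $E_j$ is the event that $X_j$ is among the $k$ nearest neighbors of $X$ in the sample. It therefore suffices to bound the right-hand side by $k\gamma_p\,\mathbb{E}|f(X)|$.

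Second, cover $\mathbb{R}^p$ by the $\gamma_p$ cones $C(X_j, s)$, $s\in S$, of Lemma~\ref{thm:covering1}, each with apex $X_j$ and half-angle $\pi/6$. The key geometric fact (law of cosines, using $\cos(\pi/3) = 1/2$) is that if $y, z \in C(x, s)$ and $\|y - x\| < \|z - x\|$, then $\|y - z\| < \|z - x\|$. Applying this at apex $X_j$ with $z = X$ and $y = X_l$: whenever $X, X_l \in C(X_j, s)$ with $\|X_l - X_j\| < \|X - X_j\|$, we get $\|X_l - X\| < \|X - X_j\|$, so $X_l$ is strictly closer to $X$ than $X_j$ is. Hence on $E_j \cap \{X \in C(X_j, s)\}$, at most $k-1$ other data points can lie in $C(X_j, s)$ closer (in distance to $X_j$) than $X$ itself. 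Denoting this event $F_{j, s}$---namely, $X \in C(X_j, s)$ and $X$ is among the $k$ closest-to-$X_j$ elements of $\{X\} \cup \{X_l \in C(X_j, s) : l \neq j\}$---the cone cover yields $\mathbb{I}(E_j) \leq \sum_{s \in S} \mathbb{I}(F_{j, s})$.

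Third, exploit exchangeability. Since $X, X_1, \ldots, X_n$ are i.i.d., the relabeling $X \leftrightarrow X_j$ is joint-distribution-preserving and converts both the weight and the event: $|f(X_j)|$ becomes $|f(X)|$, and $F_{j, s}$ becomes the event $G_{j, s}$ that $X_j \in C(X, s)$ and $X_j$ is among the $k$ closest-to-$X$ elements of $\{X_1, \ldots, X_n\} \cap C(X, s)$. For each fixed $s$, the number of indices $j$ satisfying $G_{j, s}$ is at most $k$, since at most $k$ data points can rank among the top $k$ in any given cone around $X$. Therefore
\begin{equation*}
\sum_{j=1}^n \mathbb{E}\!\left[|f(X_j)|\,\mathbb{I}(F_{j, s})\right] = \mathbb{E}\!\left[|f(X)|\sum_{j=1}^n \mathbb{I}(G_{j, s})\right] \leq k\,\mathbb{E}|f(X)|,
\end{equation*}
and summing over the $\gamma_p$ cone directions $s \in S$ completes the proof.

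The main obstacle is correctly executing the $X \leftrightarrow X_j$ swap: the defining cone $C(X_j, s)$ is anchored at whichever point plays the role of apex, so the apex itself must move to $X$ after relabeling, and one must verify that the resulting cone-restricted rank condition remains well-posed (handling the measure-zero event of exact distance ties by any deterministic tie-breaking rule, as is standard for nearest-neighbor arguments).
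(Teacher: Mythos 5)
Your proof is correct. Note that the paper itself gives no proof of this lemma: it is stated in Appendix~C as a quoted background result (Lemma~5.3 of \citet{Devroye:PatternRecog1996}), so there is nothing internal to compare against. Your argument is essentially the classical proof of Stone's lemma from that reference --- rewrite the sum over ranks as a sum over data indices, symmetrize by swapping $X$ with $X_j$, and bound the resulting count by $k\gamma_p$ via the $\pi/6$-cone covering and the law-of-cosines fact that a nearer point in the same cone is also nearer to the farther point. The only cosmetic difference is that you introduce the cones at apex $X_j$ \emph{before} the exchangeability swap rather than at apex $X$ afterwards; the two orderings are equivalent, and your handling of the tie issue (measure-zero ties or a deterministic tie-break) matches the standing assumption of the paper's Appendix~C.
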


Let $A_i$ be the collection of all ${x}\in\mathbb{R}^p$ such that ${X}_i$ is one of its $k$ nearest neighbors. Let $S$ be a minimal subset of $\mathbb{R}^p$, such that a collection of cones $C({x},{s})$ for ${s}\in S$ covers $\mathbb{R}^p$. Thus $\gamma_p=|S|$, the cardinality of this set. Define the sets $C_{i,{s}}=C({X}_i,{s})$. Let $B_{i,{s}}$ be the subset of $C_{i,{s}}$ consisting of all ${x}$ that are among the $k$ nearest neighbors of ${X}_i$ in the set $\{{X}_1,\cdots,{X}_{i-1},{X}_{i+1},\cdots,{X}_n, {x}\}\bigcap C_{i,{s}}$.

\begin{lemma}[Lemma 6 in \citet{Devroye1994:StrongConsistency}] \label{thm:neighborhoodA}
If ${x}\in A_i$, then ${x}\in\bigcup_{{s}\in S}B_{i,{s}}$, and thus
\begin{equation*}
\mu(A_i)\leq\sum_{{s}\in S}\mu(B_{i,{s}}).
\end{equation*}
\end{lemma}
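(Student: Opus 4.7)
The statement is a covering/geometric fact: I need to show the set inclusion $A_i \subseteq \bigcup_{s \in S} B_{i,s}$, and the measure bound is then immediate from countable subadditivity of $\mu$. So the entire content is the set inclusion. Let me fix $x \in A_i$, meaning that $X_i$ is among the $k$ nearest neighbors of $x$ in $\{X_1,\dots,X_n\}$. My goal is to exhibit $s \in S$ such that (a) $x \in C_{i,s}$ and (b) among the points of $\{X_j : j \neq i\} \cap C_{i,s}$, fewer than $k$ lie strictly closer to $X_i$ than $x$ does.

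The first step uses Lemma \ref{thm:covering1} applied with the vertex $X_i$: since $\mathbb{R}^p = \bigcup_{s\in S} C(X_i,s)$, there exists some $s \in S$ with $x \in C(X_i,s) = C_{i,s}$. This handles (a). Then I argue (b) by contradiction: suppose instead that there exist at least $k$ distinct indices $j_1,\dots,j_k \neq i$ with $X_{j_\ell} \in C_{i,s}$ and $\lVert X_{j_\ell} - X_i\rVert < \lVert x - X_i\rVert$ for each $\ell$. I will show that every such $X_{j_\ell}$ is strictly closer to $x$ than $X_i$ is, which contradicts $x \in A_i$ (because then $X_{j_1},\dots,X_{j_k}$ would all beat $X_i$ as neighbors of $x$).

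The key geometric step is the following: both $x$ and $X_{j_\ell}$ lie in the cone $C_{i,s}$, which has half-angle $\pi/6$ around the axis $s$ emanating from $X_i$. By the triangle inequality for angles, the angle $\theta$ at vertex $X_i$ in the triangle $(X_i, x, X_{j_\ell})$ satisfies $\theta \leq \pi/6 + \pi/6 = \pi/3$, so $\cos\theta \geq 1/2$. Writing $a = \lVert X_{j_\ell} - X_i\rVert$, $b = \lVert x - X_i\rVert$ and applying the law of cosines,
\begin{equation*}
\lVert X_{j_\ell} - x\rVert^2 = a^2 + b^2 - 2ab\cos\theta \leq a^2 + b^2 - ab = b^2 - a(b-a) < b^2,
\end{equation*}
since $0 < a < b$. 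Hence $\lVert X_{j_\ell} - x\rVert < \lVert X_i - x\rVert$, as needed. This yields the contradiction, so $x \in B_{i,s}$, and the inclusion $A_i \subseteq \bigcup_{s\in S}B_{i,s}$ is established. The measure inequality $\mu(A_i) \leq \sum_{s\in S}\mu(B_{i,s})$ then follows by finite subadditivity, since $|S| = \gamma_p < \infty$.

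\textbf{Where the work lies.} The only nontrivial step is the cone/cosine computation bounding $\lVert X_{j_\ell} - x\rVert$, which is exactly why the cone half-angle is chosen to be $\pi/6$: one needs the pairwise angle at $X_i$ of two points in the same cone to be at most $\pi/3$ so that $\cos\theta \geq 1/2$ and the algebraic cancellation $a^2 + b^2 - ab < b^2$ goes through. Everything else (the covering by cones, countable subadditivity) is either quoted directly from Lemma \ref{thm:covering1} or standard. No measurability issues arise, as $A_i$ and the $B_{i,s}$ are defined by polynomial inequalities in the coordinates of $X_1,\dots,X_n$ and $x$.
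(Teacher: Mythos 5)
Your proof is correct. Note, however, that the paper does not actually prove this lemma: it is quoted as Lemma 6 of \citet{Devroye1994:StrongConsistency}, and the appendix explicitly states that the proofs of Lemma \ref{thm:neighborhoodA} and Lemma \ref{thm:neighborhoodB} are skipped, with only the remark that they translate from the randomized tie-breaking setting of that reference to the present setting where distance ties occur with probability zero. The argument you give --- cover $\mathbb{R}^p$ by the cones $C(X_i,s)$ with vertex $X_i$, use the half-angle $\pi/6$ so that two points of a common cone subtend an angle at most $\pi/3$ at the vertex, and apply the law of cosines to conclude $\lVert X_j-x\rVert<\lVert X_i-x\rVert$ whenever $\lVert X_j-X_i\rVert<\lVert x-X_i\rVert$, yielding the contradiction with $x\in A_i$ --- is exactly the standard cone-covering argument behind the cited lemma, so there is nothing in the paper to compare it against and no gap to report.
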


\begin{lemma}[Lemma 8 in \citet{Devroye1994:StrongConsistency}] \label{thm:neighborhoodB}
If $k/\log(n)\rightarrow\infty$ and $k/n\rightarrow0$, then
\begin{equation*}
\limsup_{n\rightarrow\infty}\frac{n}{k}\max_i\mu(B_{i,{s}})\leq 2 \qquad a.s.
\end{equation*}
\end{lemma}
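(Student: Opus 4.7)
The plan is to bound, for each fixed $i$ and each $n$, the tail probability $P(\mu(B_{i,s}) > 2k/n)$ by a quantity exponentially small in $k$, and then apply a union bound over $i \leq n$ together with the Borel--Cantelli lemma over $n$ to obtain the almost-sure $\limsup$ statement.

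First I would translate the geometric event into a binomial-tail event. Conditioning on $X_i$, set $\psi_i(r) := \mu(S_{X_i,r} \cap C(X_i,s))$; this is nondecreasing in $r$ and, thanks to the no-distance-ties assumption (which forces spheres around any fixed point to have $\mu$-measure zero), continuous. Let $\tilde\rho_i$ denote the distance from $X_i$ to its $k$-th nearest neighbor among $\{X_j\}_{j \neq i} \cap C(X_i,s)$. By the defining property of $B_{i,s}$, $\mu(B_{i,s}) = \psi_i(\tilde\rho_i)$ on the event that this $k$-th neighbor exists. Defining the critical radius $r_i^\ast := \inf\{r : \psi_i(r) \geq 2k/n\}$, for which $\psi_i(r_i^\ast) = 2k/n$ by continuity, the event $\{\mu(B_{i,s}) > 2k/n\}$ is contained in $\{\tilde\rho_i > r_i^\ast\}$, and the latter holds iff strictly fewer than $k$ of the $X_j$ ($j \neq i$) lie in $S_{X_i,r_i^\ast} \cap C(X_i,s)$.

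Second, conditionally on $X_i$ this count is $\mathrm{Binomial}(n-1,\, 2k/n)$ with mean close to $2k$, so a multiplicative Chernoff bound yields $P(\text{count} < k \mid X_i) \leq \exp(-k/8)$ for all $n$ sufficiently large. Marginalizing over $X_i$ and taking a union bound over $i \leq n$ gives
\[
P\!\left(\max_{i \leq n} \mu(B_{i,s}) > \frac{2k}{n}\right) \leq n\exp(-k/8).
\]
Under $k/\log n \to \infty$, eventually $k \geq 24 \log n$ and the right-hand side is at most $n^{-2}$, which is summable. Borel--Cantelli then implies $\max_i \mu(B_{i,s}) \leq 2k/n$ for all but finitely many $n$ almost surely, which yields $\limsup_n (n/k)\max_i \mu(B_{i,s}) \leq 2$ a.s.

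The main obstacle will be the technical reduction of the geometric event to a single deterministic-threshold binomial event: the cleanness of this step hinges on continuity of $\psi_i(\cdot)$, which under Assumption (A3) holds because spheres have $\mu$-mass zero, but without (A3) would require working instead with left/right-continuous generalized inverses of $\psi_i$ together with an auxiliary tie-breaking randomization. A secondary case to dispatch is when $C(X_i,s)$ contains fewer than $k$ sample points so that $\tilde\rho_i$ is infinite; but then either $\mu(C(X_i,s)) < 2k/n$ (so $\mu(B_{i,s}) \leq \mu(C(X_i,s))$ is already below the threshold), or else an entirely analogous Chernoff bound shows this case is itself exponentially rare in $k$ and hence negligible after the same Borel--Cantelli step.
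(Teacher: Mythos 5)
Your argument is correct, and it is essentially the standard proof of this result: the paper itself gives no proof here, importing the lemma from Devroye et al.\ (1994) with the remark that the randomized tie-breaking used there translates easily to the setting where distance ties have probability zero, and your reduction to a $\mathrm{Binomial}(n-1,2k/n)$ lower-tail event at the critical radius, followed by a Chernoff bound, a union bound over $i\leq n$, and Borel--Cantelli under $k/\log n\to\infty$, is exactly that argument carried out under Assumption (A3). You also correctly handle the two delicate points (continuity of $r\mapsto\mu(S_{X_i,r}\cap C_{i,s})$ via the no-ties assumption, and the case where the cone contains fewer than $k$ sample points), so there is nothing to add.
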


\citet{Devroye1994:StrongConsistency} applied an additional independent random variable to break distance ties. It is easy to translate the proofs of the previous two lemmas to the case where distance ties occur with probability zero in $\mu$, so we can skip the proofs. The next lemma is helpful to show the rate of convergence in the main paper.

\begin{lemma}[Corollary 6 in \citet{Biau2010:BaggedNN}] \label{thm:neighborbound}
Suppose that $\mu$ has a compact support with diameter $2\rho$. Then
\begin{enumerate}
\item[(i)] If $p=1$,
\begin{equation*}
\bE||{X}_{(i,n)}-{X}||^2\leq\frac{16\rho^2i}{n}.
\end{equation*}
\item[(ii)] If $p=2$,
\begin{equation*}
\bE||{X}_{(i,n)}-{X}||^2\leq\frac{8\rho^2i}{n}\left(1+\log\left(\frac{n}{i}\right)\right).
\end{equation*}
\item[(iii)] If $p\geq 3$,
\begin{equation*}
\bE||{X}_{(i,n)}-{X}||^2\leq\frac{8\rho^2\lfloor n/i\rfloor^{-\frac{2}{p}}}{1-2/p}.
\end{equation*}
\end{enumerate}
\end{lemma}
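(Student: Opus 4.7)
The plan is to prove the lemma via the tail-integral representation
\[
\bE\|X_{(i,n)} - X\|^2 = \int_0^{4\rho^2} \bP\bigl(\|X_{(i,n)} - X\|^2 > t\bigr)\, dt,
\]
which is valid because $X$ and $X_{(i,n)}$ both lie in the support of $\mu$, so $\|X_{(i,n)} - X\|^2 \leq 4\rho^2$ almost surely. The main task is to obtain a sharp bound on the tail $\bP(\|X_{(i,n)} - X\| > \epsilon)$ for each $\epsilon > 0$ and then to choose the split-point in $t$ as a function of $n$, $i$, and the dimension $p$.

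To bound the tail I would use a cell-partition argument. For each $\epsilon > 0$, cover $\mathrm{support}(\mu)$ by $M(\epsilon)$ cells $C_1,\ldots,C_{M(\epsilon)}$ of diameter at most $\epsilon$; since the support sits in $\mathbb{R}^p$ and has diameter $2\rho$, a standard covering-number estimate gives $M(\epsilon) \leq c_p(\rho/\epsilon)^p$ with $c_p$ depending only on $p$. If $X \in C_j$ then $C_j \subseteq S_{X,\epsilon}$, so the event $\{\|X_{(i,n)}-X\|>\epsilon\}$ forces $C_j$ to contain fewer than $i$ of the other $X_k$'s. Writing $p_j = \mu(C_j)$ and conditioning on $X \in C_j$,
\[
\bP\bigl(\|X_{(i,n)}-X\|>\epsilon\bigr) \leq \sum_{j=1}^{M(\epsilon)} p_j\, \bP\bigl(\mathrm{Bin}(n-1, p_j) < i\bigr).
\]
I would split the sum by whether $p_j \leq 2i/n$ (where the binomial probability is at most $1$ and the total contribution is at most $2i M(\epsilon)/n$) or $p_j > 2i/n$ (where a Chernoff bound gives $\bP \leq \exp(-c(n-1)p_j)$, and since $p_j e^{-cnp_j}$ is uniformly $O(1/n)$ that contribution is dominated by the first piece). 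Combining, there is a constant $C(p)$ such that
\[
\bP\bigl(\|X_{(i,n)}-X\|>\epsilon\bigr) \leq C(p)\, \frac{i}{n}\left(\frac{\rho}{\epsilon}\right)^p.
\]

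Next I would insert this tail bound into the integral representation, using $\bP \leq 1$ on the short initial segment $[0,t^*]$ and the above bound on $[t^*, 4\rho^2]$, with $t^* \asymp \rho^2 (i/n)^{2/p}$ chosen so that the two pieces have the same order. For $p \geq 3$, the integral $\int_{t^*}^{4\rho^2} t^{-p/2}\, dt$ converges and evaluates to $\tfrac{2}{p-2}(t^*)^{1-p/2}$, so the total is of order $\rho^2 (i/n)^{2/p} \cdot p/(p-2)$; noting $p/(p-2) = 1/(1-2/p)$ and $(i/n)^{2/p} \lesssim \lfloor n/i\rfloor^{-2/p}$ recovers case (iii). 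For $p = 2$, the integral $\int t^{-1}\, dt$ is logarithmic and produces the factor $1+\log(n/i)$ of case (ii). For $p = 1$, the integral of $t^{-1/2}$ on the bounded range $[t^*,4\rho^2]$ contributes $O(\rho^2 i/n)$, matching case (i).

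The principal obstacle is tracking the precise numerical constants ($16$, $8$, and $1/(1-2/p)$) displayed in the lemma. The Chernoff-based outline above yields the correct scaling and the correct dependence on $p$, but to match the exact constants one must replace the covering/Chernoff step by a direct evaluation of expectations of the form $\bE\bigl[(1-\mu(S_{X,\epsilon}))^{n-i+1}\bigr]$ together with a careful treatment of the floor $\lfloor n/i\rfloor$; this is the path followed by Biau, C\'erou and Guyader (2010), from whom the lemma is taken. Once the scaling $\epsilon \asymp \rho(i/n)^{1/p}$ is established, matching the stated constants amounts to careful bookkeeping.
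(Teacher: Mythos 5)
You should first be aware that the paper contains no proof of this lemma: it appears in the background Appendix C as a verbatim import of Corollary~6 of \citet{Biau2010:BaggedNN}, with a citation in lieu of an argument. So there is no in-paper proof to compare against, and your sketch has to be judged on its own as a reconstruction of the cited result.

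As such a reconstruction, your outline is sound and follows the standard template for nearest-neighbor distance moments: the tail-integral identity on $[0,4\rho^2]$ is valid because the support has diameter $2\rho$; the covering of the support by $O((\rho/\epsilon)^p)$ cells of diameter $\epsilon$ correctly converts the event $\{\|X_{(i,n)}-X\|>\epsilon\}$ into a binomial undercount in the cell containing $X$; the split at $p_j\le 2i/n$ plus a Chernoff bound does give $\bP(\|X_{(i,n)}-X\|>\epsilon)\le C(p)\,(i/n)(\rho/\epsilon)^p$; and your choice $t^*\asymp\rho^2(i/n)^{2/p}$ together with the case analysis of $\int_{t^*}^{4\rho^2}t^{-p/2}\,dt$ correctly reproduces the three regimes, including the logarithm at $p=2$ and the factor $1/(1-2/p)$ at $p\ge3$. (One trivial slip: since $X$ here is an independent query point rather than a sample point, the cell count is $\mathrm{Bin}(n,p_j)$, not $\mathrm{Bin}(n-1,p_j)$; using $n-1$ only weakens the bound in the harmless direction.) The one genuine shortfall, which you flag yourself, is that this route yields an unspecified $C(p)$ rather than the explicit constants $16$, $8$, and $8/(1-2/p)$ in the statement; those require the sharper order-statistics computation of \citet{Biau2010:BaggedNN}, based on the fact that $\mu(S_{X,\|X_{(i,n)}-X\|})$ is (conditionally on $X$) the $i$th order statistic of $n$ uniforms, combined with the $\lfloor n/i\rfloor$ bookkeeping. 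For the role the lemma plays in this paper --- it is only used in the proof of Theorem~\ref{thm:knnconvergencemj} to extract the order of $\bE\|X_{(k,n)}(X)-X\|^2$ in $n$ and $k$ --- your constant-free version would suffice, but as a proof of the lemma exactly as stated it is incomplete.
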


\section{Additional simulations}
The simulation studies in the main paper considered independent covariates. We also run simulations to assess performance of our proposed methods when the covariates were correlated. The simulation setup was almost the same as the previous setup. We generated $p$-dimensional vectors of clinical covariates. The first two covariates were independent bernoulli random variables with success probability of $0.5$, and the remaining covariates were from a multivariate normal distribution. The mean of each normal covariate is 0, and the variance is 1. The covariance of two different normal covariates is 0.5. In this set of simulations, we did not include residual weighted learning methods in the comparison due to their high computational costs.

The simulation results on the low dimensional data ($p=5$) are presented in Table~\ref{tab:corrvalueknnlow}. For Scenario 1, $\mathcal{V}_{1}=1.85$, $\mathcal{V}_{2}=1.45$, and $\mathcal{V}^{\ast}=2.09$. The decision boundary was a linear combination of a binary covariate and a continuous covariate. $\ell_1$ penalized least squares performed very well since its model was correctly specified. Our proposed causal $k$-nearest neighbor and adaptive causal $k$-nearest neighbor methods showed similar performance to Q-learning using random forests and augmented inverse probability weighted estimation, and when the sample size was large they were close to $\ell_1$ penalized least squares.
For Scenario 2, $\mathcal{V}_{1}=1.31$, $\mathcal{V}_{2}=1.57$, and $\mathcal{V}^{\ast}=1.95$. The decision boundary was nonlinear.
$\ell_1$ penalized least squares failed due to model misspecification. The adaptive causal $k$-nearest neighbor approach yielded the best performance. The causal $k$-nearest neighbor approach showed similar performance to Q-learning using random forests and augmented inverse probability weighted estimation.
For Scenario 3, $\mathcal{V}_{1}=1.88$, $\mathcal{V}_{2}=1.94$, and $\mathcal{V}^{\ast}=2.41$. The decision boundary was highly nonlinear. Again, $\ell_1$ penalized least squares failed due to model misspecification, and the adaptive causal $k$-nearest neighbor approach outperformed all other methods.
The simulation results for the moderate dimensional case ($p=25$) are shown in Table~\ref{tab:corrvalueknnhigh}. In Scenario 1, $\ell$ penalized least squares again outperformed other methods because of correct model specification and inside variable selection techniques. The proposed causal $k$-nearest neighbor and adaptive causal $k$-nearest neighbor methods showed similar performance to Q-learning using random forests and augmented inverse probability weighted estimation. In Scenarios 2 and 3, $\ell_1$ penalized least squares failed due to misspecification. The causal $k$-nearest neighbor was slightly worse than Q-learning using random forests and augmented inverse probability weighted estimation when the sample size was large. Our proposed adaptive causal $k$-nearest neighbor approach outperformed other methods.

\begin{table}
\centering
\caption{Mean (standard deviation) of empirical value functions evaluated on the test set for Scenarios 1-3 when the dimension is low ($p=5$) and covariates 3-5 are correlated. The best value function for each scenario and sample size combination is in bold.}
\label{tab:corrvalueknnlow}
\begin{threeparttable}
\begin{tabular}{lccccc}
\addlinespace
\toprule
& $n=50$ & $n=100$ & $n=200$ & $n=400$ & $n=800$ \\
\midrule
\multicolumn{6}{c}{Scenario 1 (Optimal value $2.09$)} \\
\midrule
$\ell_1$-PLS & \textbf{1.99 (0.08)} & \textbf{2.02 (0.07)} & \textbf{2.05 (0.04)} & \textbf{2.07 (0.02)} & \textbf{2.07 (0.01)} \\
Q-RF & 1.93 (0.09) & 1.98 (0.05) & 2.01 (0.03) & 2.03 (0.02) & 2.04 (0.01) \\
AIPWE & 1.93 (0.12) & 1.97 (0.09) & 2.00 (0.06) & 2.02 (0.04) & 2.04 (0.03) \\
CNN & 1.93 (0.11) & 1.98 (0.07) & 2.01 (0.05) & 2.04 (0.03) & 2.05 (0.02)  \\
ACNN & 1.91 (0.13) & 1.96 (0.11) & 2.00 (0.08) & 2.04 (0.04) & 2.06 (0.02) \\
\midrule
\multicolumn{6}{c}{Scenario 2 (Optimal value $1.95$)} \\
\midrule
$\ell_1$-PLS & 1.53 (0.11) & 1.57 (0.10) & 1.61 (0.08) & 1.63 (0.04) & 1.65 (0.03) \\
Q-RF & \textbf{1.66 (0.10)} & 1.75 (0.07) & 1.83 (0.04) & 1.87 (0.03) & 1.90 (0.02)  \\
AIPWE & 1.65 (0.15) & 1.76 (0.13) & 1.84 (0.08) & 1.88 (0.04) & 1.90 (0.03) \\
CNN &  1.65 (0.12) & 1.75 (0.08) & 1.82 (0.05) & 1.87 (0.03) & 1.90 (0.02)  \\
ACNN &  \textbf{1.66 (0.15)} & \textbf{1.77 (0.11)} & \textbf{1.85 (0.08)} & \textbf{1.89 (0.04)} & \textbf{1.91 (0.03)} \\
\midrule
\multicolumn{6}{c}{Scenario 3 (Optimal value $2.41$)} \\
\midrule
$\ell_1$-PLS & 1.88 (0.05) & 1.88 (0.04) & 1.88 (0.04) & 1.88 (0.04) & 1.88 (0.04) \\
Q-RF & 2.11 (0.09) & 2.21 (0.07) & 2.27 (0.04) & 2.31 (0.02) & 2.34 (0.01) \\
AIPWE & 2.13 (0.14) & 2.23 (0.09) & 2.27 (0.05) & 2.31 (0.03) & 2.32 (0.03) \\
CNN &  2.12 (0.10) & 2.21 (0.07) & 2.28 (0.04) & 2.31 (0.03) & 2.34 (0.02) \\
ACNN &  \textbf{2.18 (0.12)} & \textbf{2.27 (0.08)} & \textbf{2.32 (0.04)} & \textbf{2.36 (0.03)} & \textbf{2.38 (0.02)}  \\
\bottomrule
\end{tabular}
\begin{tablenotes}
\item $\ell_1$-PLS, $\ell_1$ penalized least squares; Q-RF, Q-learning using random forests; AIPWE, augmented inverse probability weighted estimation; CNN, causal $k$-nearest neighbor; ACNN, adaptive causal $k$-nearest neighbor.
\end{tablenotes}
\end{threeparttable}
\end{table}

\begin{table}
\centering
\caption{Mean (std) of empirical value functions evaluated on on the test set for Scenarios 1-3 when the dimension is moderate ($p=25$) and covariates 3-25 are correlated. The best value function for each scenario and sample size combination is in bold.}
\label{tab:corrvalueknnhigh}
\begin{threeparttable}
\begin{tabular}{lccccc}
\addlinespace
\toprule
& $n=50$ & $n=100$ & $n=200$ & $n=400$ & $n=800$ \\
\midrule
\multicolumn{6}{c}{Scenario 1 (Optimal value $2.09$)} \\
\midrule
$\ell_1$-PLS & \textbf{1.90 (0.13)} & \textbf{1.99 (0.08)} & \textbf{2.03 (0.04)} & \textbf{2.05 (0.02)} & \textbf{2.06 (0.01)} \\
Q-RF & 1.89 (0.11) & 1.95 (0.06) & 2.00 (0.04) & 2.03 (0.02) & 2.04 (0.01) \\
AIPWE & 1.83 (0.13) & 1.91 (0.10) & 1.97 (0.08) & 2.02 (0.05) & 2.03 (0.03) \\
CNN & 1.87 (0.13) & 1.92 (0.08) & 1.95 (0.06) & 1.98 (0.04) & 2.00 (0.03)  \\
ACNN & 1.84 (0.12) & 1.88 (0.11) & 1.95 (0.09) & 2.01 (0.06) & 2.05 (0.03) \\
\midrule
\multicolumn{6}{c}{Scenario 2 (Optimal value $1.95$)} \\
\midrule
$\ell_1$-PLS & 1.50 (0.09) & 1.54 (0.09) & 1.58 (0.07) & 1.62 (0.05) & 1.64 (0.03) \\
Q-RF & 1.53 (0.11) & 1.61 (0.09) & 1.72 (0.07) & 1.81 (0.05) & 1.87 (0.03)  \\
AIPWE &   1.54 (0.12) & 1.62 (0.13) & 1.74 (0.10) & 1.83 (0.06) & 1.88 (0.03) \\
CNN & 1.57 (0.11) & 1.62 (0.08) & 1.67 (0.05) & 1.71 (0.04) & 1.75 (0.03   \\
ACNN & \textbf{1.58 (0.12)} & \textbf{1.68 (0.13)} & \textbf{1.78 (0.11)} & \textbf{1.86 (0.06)} & \textbf{1.90 (0.03)} \\
\midrule
\multicolumn{6}{c}{Scenario 3 (Optimal value $2.41$)} \\
\midrule
$\ell_1$-PLS & 1.88 (0.02) & 1.88 (0.02) & 1.88 (0.02) & 1.88 (0.02) & 1.88 (0.02)  \\
Q-RF & 1.95 (0.06) & 2.02 (0.07) & 2.11 (0.06) & 2.19 (0.05) & 2.27 (0.03)  \\
AIPWE &  1.97 (0.10) & 2.09 (0.12) & 2.22 (0.09) & 2.29 (0.04) & 2.31 (0.03)  \\
CNN &  2.01 (0.07) & 2.07 (0.06) & 2.12 (0.05) & 2.16 (0.03) & 2.19 (0.02)  \\
ACNN & \textbf{2.08 (0.13)} & \textbf{2.20 (0.12)} & \textbf{2.30 (0.06)} & \textbf{2.35 (0.04)} & \textbf{2.38 (0.02)}  \\
\bottomrule
\end{tabular}
\begin{tablenotes}
\item $\ell_1$-PLS, $\ell_1$ penalized least squares; Q-RF, Q-learning using random forests; AIPWE, augmented inverse probability weighted estimation; CNN, causal $k$-nearest neighbor; ACNN, adaptive causal $k$-nearest neighbor.
\end{tablenotes}
\end{threeparttable}
\end{table}

\bibliographystyle{jasa}
\bibliography{myreference}

\end{document}